\documentclass{article}

\usepackage[main, final]{neurips_2026}

\usepackage{natbib}

\usepackage[hyperindex,breaklinks,colorlinks,citecolor=blue]{hyperref} % allow link break (for arxiv)
\usepackage[utf8]{inputenc} % allow utf-8 input
\usepackage[T1]{fontenc}    % use 8-bit T1 fonts
\usepackage{url}            % simple URL typesetting
\usepackage{booktabs}       % professional-quality tables
\usepackage{amsfonts}       % blackboard math symbols
\usepackage{nicefrac}       % compact symbols for 1/2, etc.
\usepackage{microtype}      % microtypography
\usepackage{xcolor}         % colors
\usepackage[flushleft]{threeparttable}
\usepackage{tabularray}
\usepackage{thmtools}
\usepackage{mathtools}
\usepackage{extarrows}
\usepackage{colortbl}
\usepackage[table]{xcolor}
\usepackage{nccmath}
\usepackage{subcaption}
\usepackage{framed}
\colorlet{shadecolor}{gray!35}

\usepackage{url}
\usepackage{float}
\usepackage{subcaption}
\usepackage{graphicx}
\usepackage{enumitem}
\usepackage{multirow}
\usepackage{xspace}
\usepackage{enumitem}
\usepackage[font=small]{caption}
\usepackage[toc, page, header]{appendix}
\usepackage{footnote}
\usepackage{amsmath,amsthm,amssymb}
\newtheorem{theorem}{Theorem}[section]

\newtheorem{corollary}[theorem]{Corollary}

\newtheorem{remark}[theorem]{Remark}

\newtheorem{claim}[theorem]{Claim}

\providecommand{\uu}{\mathbf{u}}

\providecommand{\zz}{\mathbf{z}}
\providecommand{\unit}{\mathbf{1}}

\providecommand{\mM}{\mathbf{M}}

\providecommand{\mX}{\mathbf{X}}

\providecommand{\mZ}{\mathbf{Z}}

\providecommand{\cL}{\mathcal{L}}

\newenvironment{talign*}
{\csname align*\endcsname}
{\endalign}

\newcommand{\miss}{\mathrm{miss}}
\newcommand{\obs}{\mathrm{obs}}

\newcommand{\indep}{\perp \!\!\! \perp}
\newcommand{\xx}{\boldsymbol{x}}
\newcommand{\mm}{\boldsymbol{m}}
\newcommand{\mP}{\mathbb{P}}
\usepackage{tikz}

\definecolor{blue}{rgb}{0.0, 0.0, 0.7}
\usepackage{algorithm}
\usepackage{algpseudocode}
\usepackage{colortbl}
\usepackage{xcolor}

\usepackage{wrapfig} 
\usepackage{amsmath}
\allowdisplaybreaks
\author{%
\begin{tabular}{cccc}
{Yue Liu } & {Ben Liang} & {Ali Tizghadam} & {Ilijc Albanese} \\
 \textnormal{University of Toronto}    & \textnormal{University of Toronto}& \textnormal{TELUS}&\textnormal{TELUS} \\
\end{tabular}
}

\title{Integrated Imputation-Classification \\for Supervised Learning with Missing Data}

\begin{document}

\maketitle

\begin{abstract}
 %Existing solutions typically treat imputation and classification as separate steps, which makes the classifier inherently biased toward one specific imputation rather than being optimal across the full distribution of plausible completions---a consequence of the mismatch between imputation and the classification objective. 
We study supervised classification problems with missing feature values. Existing approaches often decouple imputation from classification, producing imputations that may be plausible but uninformative for prediction. Instead, we propose the Integrated Imputation and Classification Network (IICN), which jointly trains an imputer and an ${(n{+}1)}$-classdiscriminator adversarially with a single class supervised classification objective, where the discriminator learns to distinguish among the $n$ true classes and an additional ``imputed" class. We prove that at the global optimum, the imputer and discriminator together implement marginalization over missing coordinates and yield a Bayes-optimal classifier. We evaluate IICN on FashionMNIST, CIFAR-10, and tabular datasets with naturally occurring missingness. IICN outperforms classical impute-then-classify pipelines and recent generative baselines, showing strong robustness and accuracy in challenging settings.
\end{abstract}

\section{Introduction}

Missing data are ubiquitous in real-world scenarios. Due to a non-response in surveys or questionnaires, sensor failures, system errors, or transmission loss. The existence of such missingness brings challenges to downstream classification tasks. While typical supervised learning assumes fully observed inputs, learning under missing features both practically necessary and theoretically nontrivial.

We formally define the classification task under incomplete features as follows. Let $(\xx, y)$ denote the original complete labelled datapoint, where $\xx\in\mathbb{R}^d$ is the feature vector and $y$ is the corresponding label. Some entries of $\xx$ are obscured, resulting in an incomplete datapoint $\tilde \xx$. A binary missing mask vector $\mm\in\{0,1\}^{d}$ is directly inferred from $\tilde\xx$,  i.e., $\mm_j = 1$ if $\xx_j$ is observed and $0$ otherwise. Then in practice we have access only to the observation $(\tilde\xx,y)$, where\footnote{We define that $\texttt{nan}\cdot 0=0$ and $\texttt{nan}\cdot x_j=\texttt{nan}$.} $\tilde\xx = \xx\odot\mm+\texttt{nan}\odot(\unit-\mm),$ and $\mathbf{1}=[1,..,1]^T$ is a all-one vector and $\odot$ is Hadamard product. Let $\xx_{\obs} = \{\xx_j:\mm_j = 1\}$ and $\xx_{\miss} = \{\xx_j:\mm_j = 0\}$ denote the observed and missing features. Given a training dataset that contains both complete and incomplete datapoints, our goal is to train a model that accurately classifies both complete and incomplete test data.
\vspace{-2mm}
\paragraph{Limitations of impute-then-classify}A widely used strategy is the impute-then-classify pipeline, which first imputes missing features $\xx_{\miss}$ by $\hat\xx_{\miss}$ and then feed the completed input to a classifier. Despite its simplicity, this paradigm has fundamental limitations. (i) In high-dimensional settings, the combinatorial growth of missingness patterns makes accurate imputation statistically and computationally challenging. (ii) Even when learnable imputers can produce high-quality imputations, e.g., semantically plausible reconstructions~\citep{dcgan, yeh2017semantic, pathak2016context} or statistically well-calibrated estimates~\citep{gain, ipsen2021not}, such improvements in imputation quality often lead to only marginal downstream classification gains~\citep{paterakis2024we, morvan2025imputation} and may induce ambiguity or systematic bias~\citep{smieja2018processing, josse2024consistency}, because the imputation objective is not necessarily aligned with the downstream classification goal. (iii) The staged separation between imputation and prediction prevents the model from directly optimizing the classification objective, making it difficult to characterize the optimality of the resulting decision rule. %(iv) Moreover, this decoupling can amplify errors under shifts or corrupted observations, as imputation mistakes directly propagate to the final decision~\citep{MANGUSSI2025104574}.
\vspace{-2mm}
\paragraph{Prediction-aligned objective under missingness} The above limitations suggest that imputations should be evaluated and learned based on their impact on classification. Formally, the goal is to find a predictor that minimizes the expected classification risk under missingness: \nopagebreak
{
\setlength{\abovedisplayskip}{2pt}
\setlength{\belowdisplayskip}{2pt}
\setlength{\abovedisplayshortskip}{2pt}
\setlength{\belowdisplayshortskip}{2pt}
\begin{equation}
\label{eq:pred_obj}
\min_{f}\ \mathbb{E}\Big[
\ell\big(y, f(\xx_{\obs}, \xx_{\miss})\big)
\,\Big|\,\xx_{\obs}
\Big],
\end{equation}}where $\ell(\cdot,\cdot)$ denotes a classification loss and the expectation is taken over the uncertainty in the missing features $\xx_{\miss}$. %$\underset{\mathrm{miss}}{\hat{\xx}^{\star}}=\arg\underset{\hat\xx_{\miss}\in\mathbb{R}^{\mid \miss}\mid }{\min}\ l(y,\hat y_1)$ where $l$ is the loss to measure the classification performance and \begin{equation}\nonumber\hat y_1 = \arg\max P(y\mid \xx_{\obs},\hat\xx^{\star}_{\miss}).\end{equation}
 Recent studies show that such integration of imputation and classification can achieve Bayes-optimal prediction, even with a simple imputer, provided the classifier is sufficiently expressive~\citep{bayes_optimal, josse2024consistency}.

Several approaches have been proposed to address the predictive objective in Eq.~\eqref{eq:pred_obj}:

\textit{(i) Search-based methods.} When the missing patterns or the candidate set of  imputations are small, such as in low-dimensional or discrete settings~\citep{liu2022greedy,lee2022adaptive}, one can perform explicit search to find the combination of imputer and predictor that minimizes the classification loss. However, in general the number of missingness patterns grows exponentially with the input dimension, quickly rendering such search-based procedures computationally infeasible.

\textit{(ii) Marginalization-based methods.} An alternative line of work directly targets the Bayes-optimal predictor, which takes the form of marginalization over missing values: {
\setlength{\abovedisplayskip}{4pt}
\setlength{\belowdisplayskip}{4pt}
\setlength{\abovedisplayshortskip}{4pt}
\setlength{\belowdisplayshortskip}{4pt}
\[
\mP(y\mid\xx_\obs)= \int \mP(y \mid \xx_{\obs}, \xx_{\miss})\,
p(\xx_{\miss} \mid \xx_{\obs})\, \mathrm{d}\xx_{\miss}.
\]
}Early work~~\citep{ahmad1992some,tresp1994efficient,gmm} model $p(\xx_\miss\mid \xx_\obs)$ via parametric densities (e.g., GMMs) with generalized linear classifier to enable closed-form marginalization, but they are restricted to low-dimensional or simple distributions. NeuMiss~\citep{neumiss} approximates the Bayes predictor through a Neumann-series under linear-Gaussian assumptions, which limits its applicability to highly nonlinear tasks. 

\textit{(iii) Joint optimization methods.} The approach most relevant to ours is to jointly train an imputer and a classifier on the prediction loss. GACN~\citep{gacn} combines GAIN~\citep{gain} with an external classifier, propagating the classification loss on the imputed data from the classifier back through the imputer to align imputations with the predictive objective. However, it lacks theoretical insight on how the two modules interact and to which predictive rule the joint optimization converges. LAC~\citep{sugiyama2025discrete, sugiyama2025unified} minimizes a weighted loss combining reconstruction error and classification loss through alternating optimization between the imputer and the classifier. While LAC enjoys an asymptotic risk bound, it relies on point estimation rather marginalization and does not provide any provable property about the decision rule's calibration. SupMIWAE~\citep{SupMIWAE} jointly trains a variational imputation 
posterior and a classifier, approximating the predictive distribution via importance-weighted samples drawn from the posterior. While SupMIWAE accounts for imputation uncertainty through marginalization, its training 
objective is a variational lower bound on the log-likelihood, and 
performance depends on the fidelity of the generative model and 
the expressiveness of the variational family.%SupMIWAE~\citep{SupMIWAE} approximates the marginalization by drawing imputations from a variational posterior and combining them via importance weighting. While scalable, its training objective is just a variational lower bound of the likelihood and performance depends on the fildelity of the generative model and the expressiveness of the variational family.  

In this work, we propose the \textbf{Integrated Imputation and Classification Network (IICN)}, a unified end-to-end framework that addresses the limitations above through a class-aware adversarial formulation. By coupling imputation and classification within a single $(n{+}1)$-class adversarial objective, IICN is likelihood-free, expressive enough to capture the conditional distribution without parametric or variational restriction. The composition of the imputer and discriminator performs marginalization over missing coordinates and realizes a classifier that is provably Bayes-optimal at the adversarial equilibrium. Further comparison of IICN with related work is given in Appendix~\ref{sec:related}.

Our main contributions are summerized as follows:
\vspace{-1mm}
\begin{itemize}
\item \textbf{An $(n{+}1)$-class supervised adversarial framework with 
provable Bayes-optimality.}  We apply $(n{+}1)$-class semi-supervised 
Generative Adversarial Network (GAN) formulation to missing-data 
classification, where an auxiliary class encodes data completeness. We prove that, under the model of missing completely at random (MCAR), the optimal generator $G^\star$ and discriminator $D^\star$ together implement the Bayes-optimal marginalization, %$$\mP(Y=i \mid \xx_\obs) = \int \mP(Y=i \mid \mX = (\xx_\obs, \xx_\miss))\, p(\xx_\miss \mid \xx_\obs)\, \mathrm{d}\xx_\miss,$$
with $G^\star$ providing the imputation distribution and $D^\star$ the class posterior, and their composition implements the Bayes-optimal marginalization without training a separate classifier.% (Thm.~\ref{thm:classifier}).

\item \textbf{A stable training scheme.} Vanilla adversarial training 
under missingness can degenerate: the discriminator may learn to detect 
imputed inputs from mask-derived artifacts rather than semantic content. 
We propose an alternating update scheme that switches the discriminator between adversarial detection and supervised classification, besides the naive training of generator and discriminator updates. It prevents the discriminator from easily exploiting non-semantic artifacts and missingness cues and stabilizes training.

\item \textbf{Strong empirical performance, particularly under severe 
missingness.} On FashionMNIST, CIFAR-10, and tabular datasets with real missingness, IICN outperforms classical impute-then-classify pipelines and 
recent generative baselines. Empirically, IICN remains robust under severe missingness, validating our theoretical prediction.
\end{itemize}

\section{IICN: Integrated Imputation and Classification Network}
%In order to achieve the Bayes-optimal predictor, we follow an Impute-then-Classify procedure: the data is first imputed via a function $G$, and then classification is performed on the imputed input using a classifier $D$. We design IICN to jointly learn the best pair of imputer and classifier such that the imputation process aligns with the classification objective.
We now introduce IICN, a unified framework that jointly learns an imputer $G$ and a classifier $D$ via a single $(n{+}1)$-class adversarial objective. We present the framework in Section~\ref{sec:framework} and establish its theoretical guarantees in Section~\ref{sec:theory}.

\subsection{Multi-class Supervised Adversarial Training Framework for IICN} \label{sec:framework}

\begin{wrapfigure}{r}{0.5\textwidth}
    \vspace{-3.5mm}
	\centering
  	\includegraphics[width=0.45\textwidth]{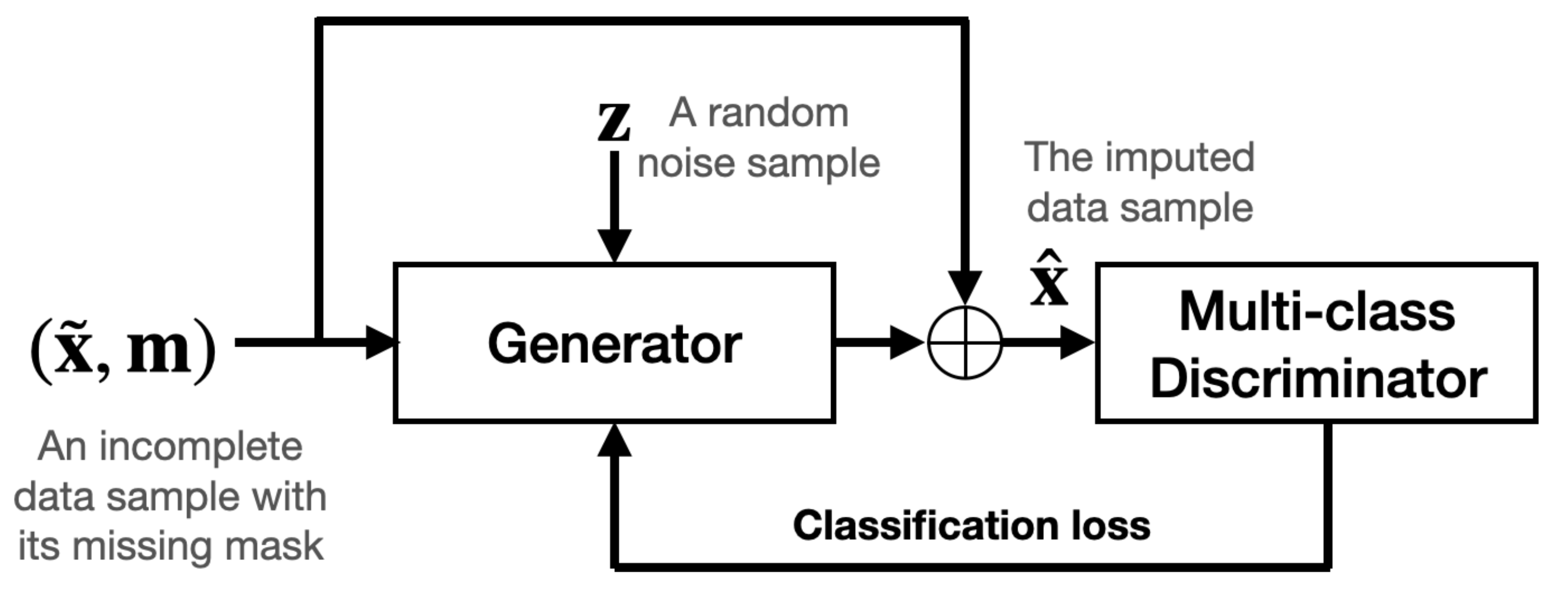}
	\caption{Schematic of the IICN network. The completed sample $\hat{\xx}$ is made up of the output of the generator and the observed parts of an incomplete sample $\tilde\xx$. The completed sample $\hat\xx$ is passed to a multi-class discriminator. The classification loss is applied to update both the generator and the discriminator.}
    \vspace{-12pt}
	\label{fig:arch}
\end{wrapfigure}

We introduce a multi-class supervised adversarial training framework designed to jointly optimize imputation and classification under missingness. IICN uses a multi-class supervised discriminator $D$ that distinguishing among $n{+}1$ classes. It replaces the binary real-vs-fake classification for standard GANs~(e.g., GAIN~\citep{gain}) with a class-level decision that directly aligns the imputer with the downstream classification objective. The generator $G$ fills in missing features conditioned on the observed ones and a latent code, and it is trained adversarially to fool $D$ into predicting one of the $n$ semantic classes. The overall architecture is illustrated in Fig.~\ref{fig:arch}. 
%Unlike traditional GANs that distinguish real from fake samples via binary discrimination, our discriminator is trained to classify each input into one of the original $n$ classes or an additional \textit{imputed} class. Through adversarial training, $D$ provides class-level feedback on imputed inputs, guiding $G$ to generate completions that are not only plausible but also discriminative for downstream prediction.

\subsubsection{Generator}
The generator $G$ takes as input a partially observed datapoint $\tilde\mX \in (\mathbb{R}\cup\{\mathrm{nan}\})^d$, a binary mask $\mM \in \{0,1\}^d$, and a latent variable $\mZ \sim P_Z$ sampled from a prior distribution with support $\mathcal{Z}$. The generator outputs a full-length vector via a mapping 
$G: (\mathbb{R}\cup\{\mathrm{nan}\})^d \times \{0,1\}^d \times \mathcal{Z} \rightarrow \mathbb{R}^d$, which is then merged with the observed entries to form the completed sample:\footnote{For notational clarity, we follow the principle that uppercase letters (e.g., $\mX,Y,\mM$) denote random variables and lowercase letters (e.g., $\xx,y,\mm$) denote their values. }
$$\hat\mX = \tilde\mX \odot \mM + G(\tilde\mX, \mM, \mZ) \odot (1 - \mM).$$
Note that on the event $\{\mM=\unit\}$, all dimensions of the datapoint are observed so $\hat\mX=\tilde\mX=\mX$.

\subsubsection{Multi-class Discriminators}
In addition to distinguishing the original $n$ classes, we introduce an auxiliary $(n{+}1)$-th class labelled as \textit{imputed}. Samples with complete data (i.e., $\mM = \mathbf{1}$) retain their original class labels, while imputed samples (i.e., those with $\mM \neq \mathbf{1}$) are assigned to the imputed class. This extension enables the discriminator to jointly capture both semantic class identity and data completeness.

Formally, the discriminator is defined as $D: \mathbb{R}^d \rightarrow \mathcal{P}^n$, where $\mathcal{P}^n := \left\{ \mathbf{p} \in (0,1)^n \,\middle| \, \sum_{i=1}^{n} p_i < 1 \right\}.$ Given an input $\hat\xx \in \mathbb{R}^d$, the output $D(\hat\xx)$ is a vector $[D_1(\hat\xx), \dots, D_n(\hat\xx)]$ representing real classes probabilities, while the residual mass $1 - \sum_{i=1}^{n} D_i(\hat\xx)$ corresponds to the probability that the input is classified as an imputed (i.e., originally incomplete) sample.

\subsubsection{Supervised Adversarial Loss}
Let $R:=\mathbb{I}\{\mM=\mathbf{1}\}\in\{0,1\}$ denote whether a sample is fully observed~($R=1$) or incomplete~($R=0$). We define the adversarial objective $V(G, D)$ based on a supervised classification loss that augments the original $n$ classes with an auxiliary $(n{+}1)$-th label for imputed inputs. The discriminator learns to classify complete samples correctly while detecting imputed ones; the generator is trained to fool the discriminator by producing imputations confidently classified as one of the $n$ original classes.

This adversarial setup naturally forms a min-max objective: 
 {
\setlength{\abovedisplayskip}{2pt}
\setlength{\belowdisplayskip}{2pt}
\setlength{\abovedisplayshortskip}{2pt}
\setlength{\belowdisplayshortskip}{2pt}
\begin{equation}
\allowdisplaybreaks
\begin{aligned}
\underset{G}{\mathrm{min}}\ \underset{D_1,\dots,D_n}{\mathrm{max}}\ V(G, D)
&:= \mathbb{P}(R=1)\sum_{i=1}^n \mathbb{P}(Y=i)
\mathbf{E}_{\hat{\mX}\mid Y=i, R = 1}\log D_i(\hat{\mX}) \\
&\quad + \mathbb{P}(R=0)
\mathbf{E}_{\hat{\mX}\mid R=0}
\log\Big(1-\sum_{j=1}^n D_j(\hat{\mX})\Big),
\end{aligned}
\label{eq:loss}
\end{equation}}This encourages the generator to produce task-aligned imputations that are indistinguishable from real samples with respect to the downstream \textit{classification} objective.
 \vspace{-1mm}
\subsection{Optimality Analysis}\label{sec:theory}
We characterize the equilibrium of the minmax objective in Eq.~\eqref{eq:loss} under a non-parametric setting and show that the optimal discriminator-generator pair $(G^\star, D^\star)$ composes a Bayes-optimal classifier under missingness~(Thm.~\ref{thm:classifier}). Throughout this analysis, we assume that feature values are missing completely at random~(MCAR)~\citep{bayes_optimal, neumiss, 
%supmiwae, 
gain%seaman2013meant
}. We first derive the optimal discriminator for a fix generator~(Section~\ref{subsec:opt_d}), then show that minimizing over generator yields a distribution alignment property~(Section~\ref{subsec:opt_g}). Finally, we combine these results to establish Bayes optimality~(Section~\ref{subsec:classifier}).
 %\begin{assumption}[Independence]
%\label{ass:joint}
%The missingness mechanism is independent of the underlying data.
%Formally, the joint random variable $(Y, \mX)$ is independent of the missing pattern $\mM$.
%\end{assumption}

\subsubsection{Optimal Discriminator}\label{subsec:opt_d}
For any fixed generator $G$, the optimal discriminator is defined as $D^\star := \arg\max_{D} V(G, D),$ where the objective $V(G,D)$ takes the form of a cross-entropy loss over an extended label space: the $n$ class labels for complete data and an additional $(n{+}1)$-th label for imputed samples. 

This is maximized when the predicted probabilities match the true posterior:
\begin{theorem}\label{thm:ds}The unique maximizer is given by 
\begin{equation}\nonumber
		D_i^{\star}(\hat\xx) %= \frac{p(y=i, \hat\xx, \mm= \mathbf{1})}{\mZ} 	\\
		 = \mP(Y=i, R=1\mid \hat\mX=\hat\xx) \quad \text{for all } \hat\xx \in \mathbb{R}^d.	\end{equation} 
\end{theorem}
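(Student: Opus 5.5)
The plan is to rewrite $V(G,D)$ as a single integral over $\hat\xx\in\mathbb{R}^d$ against the joint law of $(\hat\mX, Y, R)$ induced by the fixed generator. Then I maximize the integrand pointwise, which reduces to a weighted cross-entropy over $n{+}1$ labels.

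\textbf{Step 1: the density representation.} For fixed $G$, the random variable $\hat\mX$ has a well-defined law: on $\{R=1\}$ it equals $\mX$, and on $\{R=0\}$ it is the merge of observed coordinates with $G(\tilde\mX,\mM,\mZ)$. Let $\mu$ be a dominating measure for the law of $\hat\mX$, for instance the law of $\hat\mX$ itself, so that no Lebesgue density is needed. Write the joint as
\[
q_i(\hat\xx):=\mP(Y=i,R=1\mid\hat\mX=\hat\xx), \qquad q_0(\hat\xx):=\mP(R=0\mid\hat\mX=\hat\xx).
\]
These are well defined as regular conditional probabilities, and $\sum_{i=1}^n q_i+q_0=1$.

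\textbf{Step 2: rewrite the objective.} Using $\mP(R=1)\mP(Y=i)\,\mathbf{E}_{\hat\mX\mid Y=i,R=1}[\cdot]=\mathbf{E}[\mathbb{I}\{Y=i,R=1\}\,\cdot\,]$ gives
\[
V(G,D)=\mathbf{E}_{\hat\mX}\Big[\sum_{i=1}^n q_i(\hat\mX)\log D_i(\hat\mX)+q_0(\hat\mX)\log\Big(1-\sum_j D_j(\hat\mX)\Big)\Big].
\]
Two points need care here. The first is the use of MCAR: it justifies factoring $\mP(Y=i,R=1)=\mP(R=1)\mP(Y=i)$, so the weights in Eq.~\eqref{eq:loss} coincide with the joint probabilities. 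If instead one reads $\mP(Y=i)$ as $\mP(Y=i\mid R=1)$, no assumption is needed. The second is the tower property, which converts the indicators into $q_i$ and $q_0$.

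\textbf{Step 3: pointwise maximization.} For each $\hat\xx$, maximize
\[
\phi(p)=\sum_i q_i\log p_i+q_0\log\Big(1-\sum_i p_i\Big)
\]
over $p\in\mathcal{P}^n$. Setting $p_0=1-\sum_i p_i$ turns this into $\sum_{k=0}^n q_k\log p_k$ over the open simplex. By Gibbs' inequality (nonnegativity of $\mathrm{KL}(q\|p)$), the maximizer is $p=q$. Equivalently, a Lagrange multiplier argument on this strictly concave function gives $p_k\propto q_k$. This yields $D_i^\star(\hat\xx)=q_i(\hat\xx)$, which is the claimed formula.

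\textbf{Step 4: uniqueness, the main obstacle.} I expect this step to need the most care. First, strict concavity of $\phi$ in the coordinates with $q_k>0$ gives pointwise uniqueness. Second, if some $q_k=0$, the supremum over the open set $\mathcal{P}^n$ is not attained, so the maximizer exists only in the closure. I would address this by allowing boundary values, with the convention $0\log 0=0$.

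Uniqueness also holds only $\mu$-almost everywhere: values of $D$ off the support of $\hat\mX$ do not affect $V$. To state the result "for all $\hat\xx$", I would take the conditional probabilities as the canonical version and qualify uniqueness as holding almost everywhere with respect to the law of $\hat\mX$, as is standard in GAN optimality proofs.

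Finally, interchanging $\sup_D$ with the integral is justified for the pointwise choice $D=q$. That choice is measurable, and since $V(G,D)\le V(G,q)$ for every $D$ by integrating the pointwise inequality, it attains the supremum.
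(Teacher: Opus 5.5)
Your proposal is correct and follows the paper's route: you rewrite $V(G,D)$ as a single integral against the joint law of $(\hat\mX,Y,R)$ and maximize the integrand pointwise as an $(n{+}1)$-label cross-entropy, and your Gibbs-inequality step plays the role of the paper's Lagrangian Claim~\ref{claim: max}. Your extra care tightens points the paper's proof passes over silently: you make explicit that MCAR is what turns $\mP(R=1)\mP(Y=i)$ into $\mP(Y=i,R=1)$, you state uniqueness almost everywhere under the law of $\hat\mX$, and you allow boundary values when some conditional probability vanishes.
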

The proof of theorem is given in the Appendix~\ref{sec:opt_d}.

This shows that $D_i^\star(\hat\xx)$ represents the joint probability that the input is fully observed and belongs to class $i$. Further, normalizing over the $n$ real classes yields
\begin{equation}\label{eq:frac_d}
\frac{D_i^\star(\hat\xx)}{\sum_j D_j^\star(\hat\xx)} 
= \mP(Y=i \mid \hat\mX = \hat\xx, R=1).
\end{equation}
Conditioning on $R=1$, we know that $\hat\mX=\mX$ since all dimensions are observed, hence the event $\{\hat\mX=\hat\xx, R=1\}$ and $\{\mX=\hat\xx, R=1\}$ coincide, giving  $$\mP(Y=i \mid \hat\mX = \hat\xx, R=1) = \mP(Y=i \mid \mX = \hat\xx, R=1).$$Finally, under MCAR assumption implies $(Y, \mX) \indep R$, we have $\mP(Y=i \mid \mX = \hat\xx, R=1) = \mP(Y=i\mid \mX=\hat\xx)$. Therefore, we can rewrite the Eq.~\eqref{eq:frac_d} as\begin{equation}\label{eq:normalized}\frac{D_i^\star(\hat\xx)}{\sum_j D_j^\star(\hat\xx)}  = \mP(Y=i\mid \mX = \hat\xx)\quad\text{for all }\hat\xx\in\mathbb{R}^d.\end{equation}Note that the right-hand side is the class posterior on the complete data evaluated at the value $\hat\xx$.

\subsubsection{Optimal Generator}\label{subsec:opt_g}
Plugging in the optimal discriminator $D^\star$ from Thm.~\ref{thm:ds} back into the min-max objective in Eq.~\eqref{eq:loss}, we obtain the $V(G, D^\star) $.

It attains its minimum at $G^\star:=\arg\min_G V(G, D^\star)$. We define the imputed random variable under the optimal generator $G^\star$ as $\hat\mX^\star := \tilde\mX \odot \mM + G^\star(\tilde\mX, \mM, \mZ) \odot (1 - \mM).$
The following theorem shows that at the optimum, the completed random 
variable $\hat\mX^\star$ is independent of the missingness indicator $R$.
\begin{theorem}[Independence]
\label{thm:generator}
 Under MCAR, the global minimum of $V(G, D^\star)$ is uniquely achieved when\footnote{The existence of the minimizer relies on the fact that 
$\mP(R=1)\neq 0$, i.e., there is a non-zero probability of observing fully complete data points.}
$$p_{\hat\mX^\star}(\uu \mid R=1) = p_{\hat\mX^\star}(\uu \mid R=0), \quad \text{for all } \uu \in \mathbb{R}^d.$$
\end{theorem}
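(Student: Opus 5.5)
Substitute the optimal discriminator of Thm.~\ref{thm:ds} into Eq.~\eqref{eq:loss} and rewrite $V(G,D^\star)$ as a Jensen--Shannon divergence between the laws of $\hat\mX$ under $R=1$ and $R=0$, plus a constant that does not depend on $G$. The argument mirrors the classical GAN optimality proof, with the $n$ real classes collapsed into one.

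Write $\pi:=\mP(R=1)\in(0,1)$. Let $p_1(\uu):=p_{\hat\mX}(\uu\mid R=1)$, which equals $p_{\mX}(\uu)$ under MCAR and does not depend on $G$. Let $p_0(\uu):=p_{\hat\mX}(\uu\mid R=0)$, which is the only $G$-dependent quantity.

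\textbf{Step 1: compute the optimal discriminator.} By Bayes' rule,
\[
D_i^\star(\uu)=\frac{\pi\,\mP(Y=i\mid \mX=\uu)\,p_1(\uu)}{\pi p_1(\uu)+(1-\pi)p_0(\uu)} .
\]
Here I use MCAR to write $\mP(Y=i,\hat\mX=\uu\mid R=1)=\mP(Y=i\mid\mX=\uu)\,p_1(\uu)$. Summing over $i$ gives
\[
1-\sum_j D_j^\star(\uu)=\frac{(1-\pi)p_0(\uu)}{\pi p_1(\uu)+(1-\pi)p_0(\uu)} .
\]

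\textbf{Step 2: split the objective.} Insert these expressions into Eq.~\eqref{eq:loss} and rewrite $\sum_i\mP(Y=i)\mathbf{E}_{\hat\mX\mid Y=i,R=1}$ as $\int p_1(\uu)\sum_i \mP(Y=i\mid\mX=\uu)(\cdot)\,d\uu$. The logarithm separates as
\[
\log D_i^\star(\uu)=\log \mP(Y=i\mid\mX=\uu)+\log\frac{\pi p_1(\uu)}{m(\uu)},
\qquad m:=\pi p_1+(1-\pi)p_0 .
\]
This gives
\[
V(G,D^\star)=-\pi\,\mathbf{E}_{\mX}\big[H(Y\mid \mX)\big]+\pi\,\mathbf{E}_{p_1}\log\frac{\pi p_1}{m}+(1-\pi)\,\mathbf{E}_{p_0}\log\frac{(1-\pi)p_0}{m} .
\]
The first term is independent of $G$. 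The remaining two terms equal $\pi\log\pi+(1-\pi)\log(1-\pi)+\mathrm{JS}_\pi(p_1,p_0)$, where
\[
\mathrm{JS}_\pi(p_1,p_0)=\pi\,\mathrm{KL}(p_1\|m)+(1-\pi)\,\mathrm{KL}(p_0\|m)
\]
is the $\pi$-weighted Jensen--Shannon divergence.

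\textbf{Step 3: minimize over $G$.} $\mathrm{JS}_\pi$ is nonnegative and vanishes iff $p_1=p_0$ almost everywhere. This holds by strict convexity of KL, or Jensen's inequality, given that $0<\pi<1$. So $V(G,D^\star)$ is minimized exactly when $p_{\hat\mX^\star}(\cdot\mid R=1)=p_{\hat\mX^\star}(\cdot\mid R=0)$, and the minimal value is the $G$-independent constant. Uniqueness is understood at the level of the induced distribution $p_0$, not the map $G$.

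The infimum must also be attained by some $G$. In the nonparametric setting, take $G^\star(\tilde\xx,\mm,\zz)$ to sample the missing coordinates from $p_{\mX}(\xx_\miss\mid\xx_\obs)$ via a measurable transport of $\zz$. Under MCAR the completed vector then has law $p_{\mX}=p_1$. This conditional is well defined because $\mP(R=1)>0$ ensures the target $p_1$ is meaningful; this is where the footnote's assumption enters.

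\textbf{Main obstacle.} The delicate part is the Step 1/Step 2 bookkeeping, where class information must decouple cleanly. I have to verify that the conditional entropy term really carries no $G$-dependence. This relies on evaluating the class posterior only on the $R=1$ branch, where $\hat\mX=\mX$, and on MCAR. I also need to handle the density formalism: I will assume all laws admit densities with respect to a common dominating measure and treat null sets where $m=0$ by convention.
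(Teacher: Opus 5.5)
Your proposal is correct and follows essentially the same route as the paper. After substituting $D^\star$, the paper writes $V(G,D^\star)$ as $\mP(R=1)\,D_{\text{KL}}(p_1\,\|\,p_{\hat\mX})+\mP(R=0)\,D_{\text{KL}}(p_0\,\|\,p_{\hat\mX})$ plus the $G$-independent conditional-entropy and binary-entropy terms, which is exactly your $\pi$-weighted Jensen--Shannon divergence with $m=p_{\hat\mX}$. Your conditional-sampler construction showing the minimum is attained, and your remark that uniqueness holds only for the induced law $p_0$ (not the map $G$), go slightly beyond the paper, which only mentions existence in a footnote.
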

Equivalently, Thm.~\ref{thm:generator} implies $\hat\mX^\star \perp\!\!\!\perp R$, and hence $p_{\hat\mX^\star}(\uu\mid R=1)=p_{\hat\mX^\star}(\uu)$. See the proof in Appendix~\ref{sec:opt_g}.

Combined with the fact that $\hat\mX^\star=\mX$ when $R=1$, we have:  \begin{equation}\nonumber p_{\hat\mX^\star}(\uu \mid R = 1)=p_{\mX}(\uu \mid R = 1)\overset{\text{MCAR}}{=}p_{\mX}(\uu),\quad\text{for all } \uu \in \mathbb{R}^d.\end{equation}
\begin{corollary}[Distribution Alignment]
\label{cor:match}
$\hat\mX^\star \stackrel{d}{=} \mX,$
i.e., the imputation produced by $G^\star$ has the same 
distribution as the complete data.
\end{corollary}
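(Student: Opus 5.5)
The corollary should follow by chaining Thm.~\ref{thm:generator} with the MCAR assumption and the law of total probability. The plan is to show that the conditional densities of $\hat\mX^\star$ given $R=1$ and given $R=0$ both equal $p_{\mX}$. Mixing them with weights $\mP(R=1)$ and $\mP(R=0)$ then returns $p_{\mX}$. The footnote to Thm.~\ref{thm:generator} gives $\mP(R=1)>0$, so the conditioning on $R=1$ is well defined. If $\mP(R=0)=0$, then $\hat\mX^\star=\mX$ almost surely and there is nothing to prove, so I will assume both probabilities are positive.

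\textbf{Step 1 (the complete branch).} On the event $\{R=1\}=\{\mM=\unit\}$ the merge formula gives $\hat\mX^\star=\tilde\mX\odot\mM=\mX$, regardless of $G^\star$. For any Borel set $A$, the events $\{\hat\mX^\star\in A,R=1\}$ and $\{\mX\in A,R=1\}$ therefore coincide. Hence the law of $\hat\mX^\star$ given $R=1$ equals the law of $\mX$ given $R=1$. Because $R$ is a function of $\mM$ and MCAR gives $\mM\indep \mX$, we also have $R\indep\mX$. So the law of $\mX$ given $R=1$ is just $p_{\mX}$, and $p_{\hat\mX^\star}(\uu\mid R=1)=p_{\mX}(\uu)$.

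\textbf{Step 2 (the incomplete branch).} By Thm.~\ref{thm:generator}, $p_{\hat\mX^\star}(\uu\mid R=0)=p_{\hat\mX^\star}(\uu\mid R=1)$ for all $\uu$. By Step 1 this common value is $p_{\mX}(\uu)$.

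\textbf{Step 3 (mixing).} By total probability,
\[
p_{\hat\mX^\star}(\uu)=\mP(R=1)\,p_{\mX}(\uu)+\mP(R=0)\,p_{\mX}(\uu)=p_{\mX}(\uu),
\]
which gives $\hat\mX^\star\stackrel{d}{=}\mX$.

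\textbf{Main obstacle.} The only delicate point is measure-theoretic, and it is mild. Thm.~\ref{thm:generator} is phrased pointwise in terms of densities, but $\hat\mX^\star$ given $R=0$ need not have a Lebesgue density; for instance, $G^\star$ might be deterministic in some coordinates. I would therefore run the argument at the level of conditional laws: read Thm.~\ref{thm:generator} as equality of the conditional distributions $\mP(\hat\mX^\star\in A\mid R=r)$ for $r\in\{0,1\}$ and all Borel $A$. Steps 1--3 then go through verbatim with probabilities of events in place of densities.
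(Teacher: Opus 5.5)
Your proposal is correct and follows essentially the paper's route. The paper reads Thm.~\ref{thm:generator} as $\hat\mX^\star \indep R$, so that $p_{\hat\mX^\star}(\uu)=p_{\hat\mX^\star}(\uu\mid R=1)$, and then identifies this with $p_{\mX}(\uu\mid R=1)=p_{\mX}(\uu)$ using $\hat\mX^\star=\mX$ on $\{R=1\}$ together with MCAR. Your total-probability mixing in Step 3 is just that independence step written out, and phrasing everything in terms of conditional laws instead of densities is a harmless, slightly more careful refinement.
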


As a direct consequence of Thm.~\ref{thm:generator}, the total probability mass assigned to the original $n$ classes becomes independent of the input, as stated below.
\begin{remark}\label{Claim:sum_ds}
Under the optimal generator $G^\star$ and discriminator $D^\star$, for 
any $\hat\xx^\star \in \mathbb{R}^d$,
$$
\sum_i^n D_i^\star(\hat\xx^\star) \xlongequal{\text{Thm.~\ref{thm:ds}}}\sum_i^n \mP(Y=i, R=1\mid \hat\mX^\star=\hat\xx^\star) = \mP(R=1\mid \hat\mX^\star=\hat\xx^\star)  \xlongequal{\text{Thm.~\ref{thm:generator}}} \mP(R=1).
$$
\end{remark}
Thus $D^\star$ recovers this probability $\mP(R=1)$ on every input.

\subsubsection{Predictive Behaviour at Optimality}\label{subsec:classifier}

The distributional alignment achieved by the optimal generator has a direct implication on the output of the discriminator. Throughout this section, we consider an incomplete 
observation $\tilde\xx_0$ with the missing mask $\mathbf{m}_0 \neq \unit$, which 
partitions the coordinates into observed indices 
$\obs := \{j : (\mathbf{m}_0)_j = 1\}$ and missing indices 
$\miss := \obs^c$. Note that $\obs$ and $\miss$ are determined by 
$\mathbf{m}_0$ alone and are properties of the datapoint. Let $\hat\mX_\miss^\star = [G^\star(\xx_\obs, \mathbf{m}_0, \mZ)]_\miss$ denote the corresponding imputation. 

\paragraph{The optimal predictor with IICN}
Substituting any realisation $\hat\xx^\star_\miss$ of $\hat\mX^\star_\miss$ into Eq.~\eqref{eq:normalized} at $\hat\xx=(\xx_\obs, \hat\xx^\star_\miss)$:\begin{equation}\label{eq:pointwise}\frac{D_i^\star(\xx_\obs, \hat\xx_\miss^\star)}{\sum_jD_j^\star(\xx_\obs, \hat\xx^\star_\miss)} = \mP(Y=i\mid \mX=(\xx_\obs, \hat\xx^\star_\miss)).\end{equation}That is, IICN treats the imputed value $(\xx_\obs, \hat\xx_\miss^\star)$ 
as if it were a complete observation and returns the corresponding Bayes 
posterior. Taking expectation over $\hat\mX_\miss^\star$ in Eq.~\eqref{eq:pointwise} and using $\hat\mX_\miss^\star \mid \xx_\obs \stackrel{d}{=} \mX_\miss \mid \xx_\obs$ by Thm.~\ref{thm:generator} marginalizes out the missing coordinates,
yielding a Bayes-optimal classifier under missingness:

\begin{theorem}[Bayes-Optimal Classifier]
\label{thm:classifier}
At the optimum $(G^\star, D^\star)$,
\begin{equation}\label{eq:classifier}
\mathbb{E}_{\hat\mX_\miss^\star \mid \xx_\obs}\!\left[
\frac{D_i^\star(\xx_\obs, \hat\mX_\miss^\star)}
     {\sum_{j=1}^n D_j^\star(\xx_\obs, \hat\mX_\miss^\star)}
\right]
= \mP(Y=i \mid \xx_\obs).
\end{equation}
\end{theorem}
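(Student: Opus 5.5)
The plan is to combine the pointwise identity in Eq.~\eqref{eq:normalized} with a conditional version of the distribution alignment, and then apply the tower property. First, Eq.~\eqref{eq:normalized} holds for every $\hat\xx\in\mathbb{R}^d$. In particular it holds at every realisation $(\xx_\obs,\hat\xx^\star_\miss)$, which gives Eq.~\eqref{eq:pointwise}. The integrand on the left of Eq.~\eqref{eq:classifier} therefore equals the function $h_i(\xx_\obs,\uu_\miss):=\mP(Y=i\mid \mX=(\xx_\obs,\uu_\miss))$ evaluated at $\uu_\miss=\hat\mX^\star_\miss$. The left-hand side then reads
\[
\int h_i(\xx_\obs,\uu_\miss)\, p_{\hat\mX^\star_\miss\mid \xx_\obs}(\uu_\miss)\,\mathrm{d}\uu_\miss .
\]

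Second, I need the conditional alignment $p_{\hat\mX^\star_\miss\mid \xx_\obs}=p(\xx_\miss\mid\xx_\obs)$, where the left side is the law of $G^\star$'s imputation on the fixed pattern $\mm_0$. Corollary~\ref{cor:match} only gives marginal equality $\hat\mX^\star\stackrel{d}{=}\mX$, so I will instead use Thm.~\ref{thm:generator}, i.e.\ $\hat\mX^\star\indep R$, and refine it to the event $\{\mM=\mm_0\}$.

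This refinement is the main obstacle. Independence from $R$ alone mixes all incomplete patterns together, and a single pattern's conditional law need not match. My first attempt is to argue that the adversarial objective decomposes across patterns. Because $G$ receives $\mM$ as input, it can be optimized separately for each $\mm_0$. The discriminator's optimal output depends only on $\hat\xx$, but at the generator optimum each pattern-specific pushforward must coincide with $p_\mX$ for the minimum to be attained uniquely. Otherwise a per-pattern deviation would leave some residual JS-type divergence. If this decomposition does not go through cleanly, I will state it as the interpretation of Thm.~\ref{thm:generator} used in the text, namely $p_{\hat\mX^\star}(\cdot\mid \mM=\mm_0)=p_\mX(\cdot\mid\mM=\mm_0)=p_\mX$ by MCAR.

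Given the per-pattern equality, observe that on $\{\mM=\mm_0\}$ the observed coordinates satisfy $\hat\mX^\star_\obs=\mX_\obs$. Conditioning the joint equality $(\mX_\obs,\hat\mX^\star_\miss)\stackrel{d}{=}(\mX_\obs,\mX_\miss)$ on $\mX_\obs=\xx_\obs$ then yields $\hat\mX^\star_\miss\mid\xx_\obs\stackrel{d}{=}\mX_\miss\mid\xx_\obs$. MCAR removes the conditioning on $\mM$ on the right-hand side.

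Third, substituting this into the integral gives
\[
\int \mP(Y=i\mid \xx_\obs,\xx_\miss)\,p(\xx_\miss\mid\xx_\obs)\,\mathrm{d}\xx_\miss=\mP(Y=i\mid\xx_\obs),
\]
by the law of total probability, which is exactly the marginalization formula from the introduction. Bayes-optimality follows because the Bayes rule under missingness is $\arg\max_i \mP(Y=i\mid\xx_\obs)$, and MCAR ensures that conditioning on $\mM=\mm_0$ does not alter this posterior. The remaining points are measure-theoretic (densities existing, conditional laws defined for a.e.\ $\xx_\obs$), and I will handle them by stating the identity almost everywhere in $\xx_\obs$.
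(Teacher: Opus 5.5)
\paragraph{Where the proposal breaks.} You isolate the right crux. Your first and third steps coincide with the paper's proof: the pointwise use of Eq.~\eqref{eq:normalized}, then integration against the imputation law and total probability. Everything therefore hinges on the per-pattern alignment $\hat\mX^\star_\miss\mid\xx_\obs\stackrel{d}{=}\mX_\miss\mid\xx_\obs$ on $\{\mM=\mm_0\}$, and your justification of it fails.

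The discriminator never sees $\mM$. So $V(G,D^\star)$ depends on $G$ only through the $R=0$ mixture $p_{\hat\mX}(\cdot\mid R=0)=\sum_{\mm\neq\unit}\mP(\mM=\mm\mid R=0)\,q_{\mm}$, where $q_{\mm}$ is the law of the completed sample under pattern $\mm$. By the computation in the proof of Thm.~\ref{thm:generator}, $V(G,D^\star)$ equals, up to $G$-independent constants, a JS-type divergence between $p_\mX$ and this mixture. It is not a sum of per-pattern divergences. Deviations of individual $q_{\mm}$ that cancel in the mixture cost nothing, and the uniqueness in Thm.~\ref{thm:generator} is uniqueness of the mixture only.

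Here is a concrete counterexample. Let $X_1,X_2$ be i.i.d.\ uniform on $[0,1]$, and let the incomplete patterns $(1,0)$ and $(0,1)$ be equally likely (MCAR). Under the first pattern, let $G$ draw $\hat x_2$ uniformly from whichever of $[0,\tfrac12)$, $[\tfrac12,1]$ contains $x_1$. Under the second, let $G$ draw $\hat x_1$ uniformly from the half not containing $x_2$. The two $q_{\mm}$ have density $2$ on complementary regions of area $\tfrac12$ in $[0,1]^2$. Hence the mixture is uniform, $\hat\mX\indep R$, and $G$ is a global minimizer. Now take $Y=\mathbb{I}\{X_2\ge\tfrac12\}$. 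Under pattern $(1,0)$, the left side of Eq.~\eqref{eq:classifier} for class $Y=1$ equals $\mathbb{I}\{x_1\ge\tfrac12\}$, whereas $\mP(Y=1\mid x_1)=\tfrac12$.

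\paragraph{Consequences and possible fixes.} Your fallback reads Thm.~\ref{thm:generator} as $p_{\hat\mX^\star}(\cdot\mid\mM=\mm_0)=p_\mX$. That amounts to assuming the very lemma you need, and without further hypotheses the statement fails at some global optima, as above. The paper's own proof makes the same leap. It obtains Eq.~\eqref{eq:proof_dist_match} from the marginal identity $\hat\mX^\star\stackrel{d}{=}\mX$ of Cor.~\ref{cor:match} together with $\hat\mX^\star\odot\mM=\mX\odot\mM$. That is exactly the marginal-to-per-pattern inference you rightly doubted.

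A valid argument needs an extra hypothesis. One option is a single incomplete pattern $\mm_0$. Then $p_{\hat\mX}(\cdot\mid R=0)=q_{\mm_0}$ has density $p(\xx_\obs)\,g(\xx_\miss\mid\xx_\obs)$, and equality with $p_\mX$ forces $g=p(\xx_\miss\mid\xx_\obs)$ for a.e.\ $\xx_\obs$. Another option is a pattern-wise adversarial game in which each $q_{\mm}$ is compared with $p_\mX$ separately. Under either hypothesis, the rest of your argument goes through as written.
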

The proof is given in Appendix~\ref{sec:opt_c}.
This shows that IICN attains Bayes-optimal classification under 
missingness without training a separate classifier: $G^\star$ supplies 
the imputation distribution $p(\xx_\miss \mid \xx_\obs)$~(a consequence of Cor.~\ref{cor:match}), $D^\star$ 
supplies the complete-data class posterior, and taking expectation of $D^\star$'s output over $G^\star$'s imputation distribution exactly marginalizes out the missing coordinates.

\subsection{IICN Algorithm}\label{sec:algo}
\begin{algorithm}[t]
\caption{IICN training procedure with alternating discriminator updates.}
\label{alg:IICN}
\setlength{\abovedisplayskip}{2pt}
\setlength{\belowdisplayskip}{2pt}
\begin{algorithmic}[1]\setlength{\itemsep}{0pt}
\State \textbf{Input:} $\mathcal{D}=\{(\tilde\xx,\mm,y)\}$; phase lengths $L_1, L_2$; learning rates $\eta_{\mathrm{d}}, \eta_{\mathrm{c}}, \eta_{\mathrm{g}}$; regularization weight $\lambda$; noise prior $p_{\mZ}(\zz)$; number of labels $n$
\State \textbf{Initialize} generator parameters $\boldsymbol{\theta}$ and discriminator parameters $\boldsymbol{\omega}$
\State Set period $H \gets L_1 + L_2$
\For{$t = 1$ to $T$}
\State Sample minibatch $\mathcal{B}=\{(\tilde{\xx}_i,\mm_i,y_i)\}_{i=1}^B \subset \mathcal{D}$
\State Sample noise $\zz_i \sim p_{\mZ}(\zz)$ and impute $\hat\xx_i \gets \tilde\xx_i \odot \mm_i + G_{\boldsymbol{\theta}}(\tilde\xx_i,\mm_i,\zz_i)\odot(\unit-\mm_i)$
\State Completeness indicator $r_i \gets \mathbf{1}[\mm_i=\unit]$
\If{$(t \bmod H) < L_1$}\hfill \textnormal{\small\color{blue!60}[Phase A: Adversarial update]}\\
    \colorbox{blue!5}{
    \begin{minipage}{0.97\linewidth}
    \Statex\hspace*{\algorithmicindent} Create pseudo-labels:
        $y_i^{\mathrm{adv}} \gets r_i \cdot y_i + (1-r_i)\cdot (n{+}1)$
    \Statex\hspace*{\algorithmicindent} Compute adversarial loss:
        $\mathcal{L}_{\mathrm{adv}}(\boldsymbol{\omega}) \gets \frac{1}{B}\sum_{i=1}^B 
        \mathrm{CE}\!\left(D_{\boldsymbol{\omega}}(\hat\xx_i),\, y_i^{\mathrm{adv}}\right)$
    \Statex\hspace*{\algorithmicindent} Update discriminator:
        $\boldsymbol{\omega} \leftarrow \boldsymbol{\omega} - \eta_{\mathrm{d}} \nabla_{\boldsymbol{\omega}}\mathcal{L}_{\mathrm{adv}}(\boldsymbol{\omega})$
    \end{minipage}
    }
  \Else\hfill \textnormal{\small\color{blue!60}[Phase B: Classification update]}\\
    \colorbox{blue!15}{
    \begin{minipage}{0.97\linewidth}
    \Statex\hspace*{\algorithmicindent} Compute classification loss:
        $\mathcal{L}_{\mathrm{cls}}(\boldsymbol{\omega}) \gets \frac{1}{B}\sum_{i=1}^B 
        \mathrm{CE}\!\left(D_{\boldsymbol{\omega}}(\hat\xx_i),\, y_i\right)$
    \Statex\hspace*{\algorithmicindent} Update discriminator:
        $\boldsymbol{\omega} \leftarrow \boldsymbol{\omega} - \eta_{\mathrm{c}} \nabla_{\boldsymbol{\omega}}\mathcal{L}_{\mathrm{cls}}(\boldsymbol{\omega})$
    \end{minipage}
    }
 \EndIf
\State Compute generator loss on incomplete samples:
$\mathcal{L}_{\mathrm{gen}}(\boldsymbol{\theta})\gets\frac{1}{B}\sum_{i=1}^{B} (1-r_i)\;
\Big(
       - \mathrm{CE}\!\left(D_{\boldsymbol{\omega}}(\hat\xx_i),\, y_i^{\mathrm{adv}}\right)
        + \lambda\cdot\mathrm{MSE}_i(\boldsymbol{\theta})
\Big)$
\State Update generator:
$\boldsymbol{\theta} \leftarrow \boldsymbol{\theta} - \eta_{\mathrm{g}} \nabla_{\boldsymbol{\theta}}\mathcal{L}_{\mathrm{gen}}(\boldsymbol{\theta})$
\EndFor
\end{algorithmic}
\end{algorithm}

While the proposed objective enjoys desirable theoretical properties, directly optimizing it with standard adversarial training often leads to unstable dynamics and degenerate solutions~\citep{goodfellow2014generative, arjovsky2017towards, arjovsky2017wasserstein, heusel2017gans, mescheder2018training}. In our setting, a naively trained discriminator can exploit shortcut cues that separate real incomplete observations from generator imputations, rejecting imputed samples without assessing semantic plausibility~\citep{ilyas2019adversarial, clarysse2022adversarial, mctavish2024interpretable}. This failure mode undermines both semantic classification and the quality of the feedback provided to the generator.

To address this challenge, we introduce an two-phase alternating training strategy that enforces task-aligned adversarial behavior by integrating classification signals into discriminator optimization, thus preventing the discriminator from collapsing to a trivial imputation detector. The overall training procedure is summarized in Algorithm~\ref{alg:IICN}. In this algorithm, we consider the parametrized discriminator and generator, $D_{\boldsymbol{\omega}}$ and $G_{\boldsymbol{\theta}}$. For a minibatch of size $B$, let $i \in \{1,\ldots,B\}$ index samples in the minibatch, and let $r_i \gets \mathbb{I}\{\mm_i=\mathbf{1}\}$ indicate whether sample $i$ is complete. Each sample is manipulated by $\hat\xx_i \gets \tilde\xx_i \odot \mm_i + G_{\boldsymbol{\theta}}(\tilde\xx_i,\mm_i,\zz_i)\odot(1-\mm_i)$, where $\zz_i$ are sampled from a random noise.

\paragraph{Two-phase alternating optimization of the discriminator.}Training of the discriminator proceeds in periods of length $H=L_1+L_2$. At iteration $t$, we perform an adversarial discriminator update if $(t \bmod H) < L_1$, and a supervised classification update otherwise.

%$\hat\xx_i \gets \tilde\xx_i \odot \mm_i + G(\tilde\xx_i,\mm_i,\zz_i)\odot(1-\mm_i)$
\emph{Phase A (adversarial update).}
 We construct an adversarial label that treats incomplete samples as an additional class: $y_i^{\mathrm{adv}} = r_i y_i + (1-r_i)(n{+}1).$
The discriminator is 
updated via gradient descent on the cross-entropy loss (Lines 8 
and 9):
 {
\setlength{\abovedisplayskip}{5pt}
\setlength{\belowdisplayskip}{5pt}
\setlength{\abovedisplayshortskip}{5pt}
\setlength{\belowdisplayshortskip}{5pt}
\[
\mathcal{L}_{\mathrm{adv}}(\boldsymbol{\omega}) \gets \frac{1}{B}\sum_{i=1}^B 
        \mathrm{CE}\!\left(D_{\boldsymbol{\omega}}(\hat\xx_i),\, y_i^{\mathrm{adv}}\right).
\]}
\emph{Phase B (classification update).}
To prevent the discriminator from collapsing into a pure imputation 
detector, we periodically substitute the adversarial label with the 
true semantic label, applying gradient descent on the classification 
loss (Lines 10 and 11):
 {
\setlength{\abovedisplayskip}{5pt}
\setlength{\belowdisplayskip}{5pt}
\setlength{\abovedisplayshortskip}{5pt}
\setlength{\belowdisplayshortskip}{5pt}
\[
\mathcal{L}_{\mathrm{cls}}(\boldsymbol{\omega}) \gets \frac{1}{B}\sum_{i=1}^B 
        \mathrm{CE}\!\left(D_{\boldsymbol{\omega}}(\hat\xx_i),\, y_i\right).
\]}
This alternating schedule discourages persistent reliance on artifact-based shortcut features and mitigates early discriminator domination.
\vspace{-3mm}
\paragraph{Generator optimization with reconstruction regularization.}
The generator is trained only on incomplete samples (i.e., $r_i=0$) to produce imputations that both fool the discriminator away from the ``incomplete'' class and remain consistent with observed entries. That is, the generator is updated via gradient descent on the following loss (Line 13)
 {
\setlength{\abovedisplayskip}{5pt}
\setlength{\belowdisplayskip}{5pt}
\setlength{\abovedisplayshortskip}{5pt}
\setlength{\belowdisplayshortskip}{5pt}
\[
\mathcal{L}_{\mathrm{gen}}(\boldsymbol{\theta})\gets\frac{1}{B}\sum_{i=1}^{B} (1-r_i)\;
\Big(
       - \mathrm{CE}\!\left(D_{\boldsymbol{\omega}}(\hat\xx_i),\, y_i^{\mathrm{adv}}\right)
        + \lambda\cdot\mathrm{MSE}_i(\boldsymbol{\theta})
\Big),
\]}
where the reconstruction term enforces agreement on observed coordinates:
 {
\setlength{\abovedisplayskip}{5pt}
\setlength{\belowdisplayskip}{5pt}
\setlength{\abovedisplayshortskip}{5pt}
\setlength{\belowdisplayshortskip}{5pt}
\begin{equation}\label{eq:mse}
\mathrm{MSE}(\tilde{\xx},\mm,\zz)
\;=\;
\frac{\left\|\big(\tilde{\xx}-G_{\boldsymbol{\theta}}(\tilde{\xx},\mm,\zz)\big)\odot \mm\right\|_2^2}{\|\mm\|_2^2}.
\end{equation}}

\section{Experiments}
We compare IICN against a diverse set of baselines covering 
distinct strategies for classification under missingness. Zero imputation~(Zero) and mean imputation~(Mean) are simple baselines that do not explicitly model missingness, with the former reflecting performance without any imputation reasoning and the latter using a global statistic. GAIN~\citep{gain} represents adversarial imputation methods applied as a downstream-task-independent preprocessing step. SupMIWAE~\citep{SupMIWAE} and LAC~\citep{sugiyama2025discrete, sugiyama2025unified} are recent joint optimization methods that align imputation with the classification objective: SupMIWAE via importance-weighted variational inference, and LAC via alternating minimization of a weighted loss that combines the prediction error and reconstruction error. As a reference, we also include complete-only (CO) training, which discards incomplete datapoints and uses only fully observed data in both training and testing. Together, these baselines span the major categories of missing-data classification strategies, allowing us to evaluate IICN against both classical preprocessing approaches and state-of-the-art end-to-end methods.

\subsection{Experimental Setup}
We evaluate IICN on three dataset categories progressing from controlled to realistic missingness: (i) synthetic Burger dataset (following \citet{SupMIWAE}) for validation under known 
data-generating processes; (ii) FashionMNIST~\citep{xiao2017fashion} and CIFAR-10~\citep{krizhevsky2009learning} with artificial missingness, allowing controlled missing rates on high-dimensional real data; and (iii) UCI~\citep{asuncion2007uci} tabular datasets with naturally occurring missing values for realistic evaluation, including Mammographic Mass, Diabetes, Hepatitis, and Thyroid.\footnote{For the tabular datasets, we drop ID, free-text, and high-missingness ($>80\%$) columns, and integer-encode all categorical features, factorize labels.} All codes are included in the supplemental materials.

%\paragraph{Missingness.} For synthetic dataset and image benchmarks, we introudce synthetic missing values in a contolled manner. On synthetic Burger datasets, we control the missing rate by randomly masking the second dimension $\xx_2$. For FashionMNIST and CIFAR-10, we control the proportion of complete datapoints via $P_{\mathrm{obs}}$, and independently mask each dimension of incomplete datapoints with probability $P_{\mathrm{miss}}$. For tabular datasets, we use the naturally occurring missing values present in the data without additional modification. As the missingness mechanism is unknown, this setting may deviate from the MCAR assumption and reflects a more realistic missing-data scenario.

\paragraph{Training details} For synthetic and tabular data, all methods use an MLP backbone.  For FashionMNIST and CIFAR-10, we use convolutional neural networks for both the generator and the discriminator. The generator follows a shared convolutional backbone with task-specific output layers for imputation~(Appendix~\ref{app:model}). Monte Carlo marginalization during testing (see Appendix~\ref{app:inference} for details), we use $K=50$ samples on synthetic data and $K=1$ on image datasets for both SupMIWAE and IICN. Note that due to the relatively small sample sizes of tabular datasets, 
training deep models in this setting is prone to high variance. To stabilize evaluation, we adopt a $k$-fold cross-validation ensemble: for each method, we train $k$ separate models on different training/validation folds and average their predictions at test time. These strategies help stabilize training and evaluation in small-sample regimes with substantial missingness~\citep{hastie2009elements,guo2017calibration, maddox2019simple}.

\subsection{Numerical Results}
\paragraph{Synthetic dataset}
For synthetic Burger datset, we set the second dimension missing at random. In this case, $p(\xx_2|\xx_1)$ is multi-modal under missingness. It's a setting where any deterministic single-point imputation (zero, mean, LAC) collapses onto one mode, and only stochastic methods combined with marginalization can yeild reasonable predictor. The Bayes-optimal classifier for this case always satisfies $\mP(Y=1 \mid \xx_1)=2/3$, reflecting the intrinsic ambiguity induced by incomplete observations. 

Figure~\ref{fig:burger} illustrates qualitative differences across methods. The top row shows a representative example with missing rate equal to $0.9$. Zero and mean imputation collapse missing values to a single fixed point (zero or the global average), forcing all imputed samples into a narrow region and producing distorted decision boundaries. GAIN partially recovers the data distribution but exhibits pronounced ambiguity near the class boundary, resulting in unstable predictions. SupMIWAE produces smoother decision regions  through importance-weighted sampling, yet remains sensitive to the intrinsic uncertainty induced by missing values. LAC's deterministic estimation fails in such multi-modal case, concentrating imputed samples in a single mode of the conditional distribution. In contrast, IICN spans both modes and yields decision boundaries closest to the Bayes-optimal classifier. 

The bottom row quantifies these behaviors across different missing rates. Single-imputation methods such as zero imputation, mean imputation, GAIN and LAC exhibit substantial deterioration in performance on complete datapoints as the missing rate increases, while their accuracy on incomplete datapoints improves. But this rise is deceptive: $67\%$ coincides with the marginal class probability $\mP(Y=1)=2/3$, indicating that the classifier has degenerated into a constant majority-class predictor rather than learning to discriminate from imputed inputs. The accompanying drop in accuracy on complete datapoints at missing rate equal to $0.9$ 
confirms this collapse, regardless of input. In contrast, SupMIWAE and IICN remain stable across all missing rates and approach $2/3$ on incomplete datapoints through genuine marginalization rather than degenerate prediction. IICN matches the bound most consistently, empirically validating Eq.~\eqref{eq:classifier}.
\vspace{-2mm}
%While all methods achieve comparable performance on fully observed datapoints, IICN consistently outperforms all baselines on incomplete datapoints and approaches the theoretical upper bound imposed by the inherent ambiguity of the task. Together, these results demonstrate that IICN effectively leverages incomplete observations without degrading performance when data is fully observed.

%As shown in bottom row, single-imputation methods such as mean imputation and GAIN exhibit degraded and unstable performance as the missing rate increases, often approaching random guessing on incomplete datapoints.
%This behavior arises from the inherent randomness in inferring the missing dimension from incomplete observations. In contrast, methods incorporating multiple imputations achieve more stable performance across different missing rates. Among them, IICN consistently outperforms SupMIWAE under the same number of imputations and approaches the theoretical upper bound of $2/3$, confirming the validity of our theoretical analysis in Equation~\ref{eq:classifier}.
\begin{figure}[t]
\begin{minipage}{\textwidth}
\centering
\vspace{-0.06\linewidth}
\hspace{-0.02\linewidth}
\begin{subfigure}{0.18\textwidth}
    \centering
    \includegraphics[height=2.6cm]{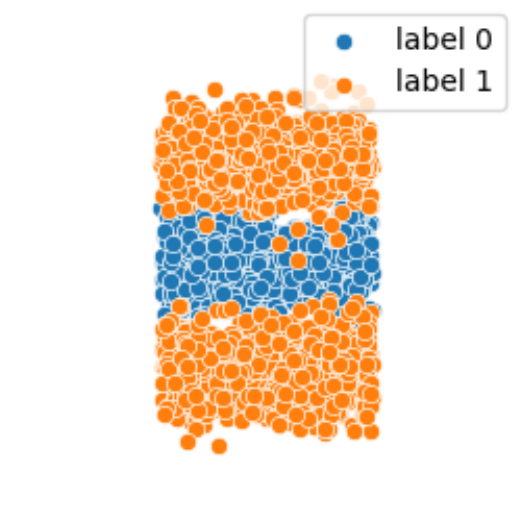}
    \vspace{-0.2\linewidth}
    \caption*{Burger dataset}
\end{subfigure}\hspace{-0.001\linewidth}
\begin{subfigure}{0.13\textwidth}
    \centering
    \includegraphics[width=\textwidth]{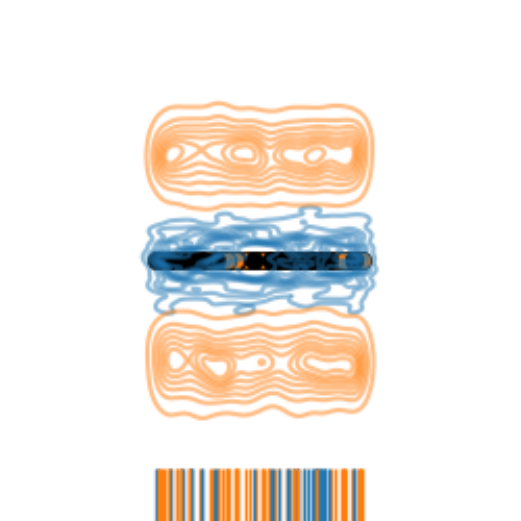}
    \includegraphics[width=\textwidth]{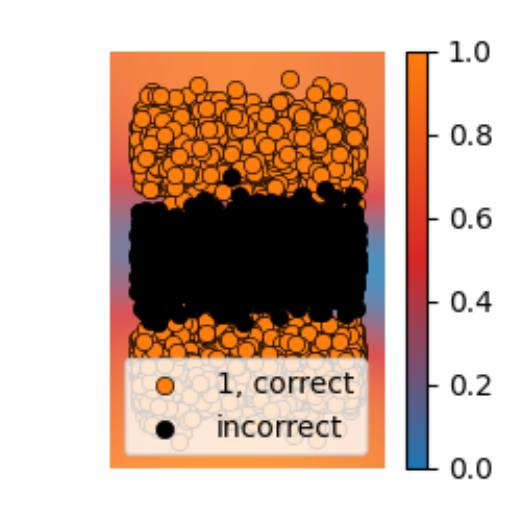}
    \caption{Zero}
\end{subfigure}\hspace{-0.001\linewidth}
\begin{subfigure}{0.13\textwidth}
    \centering
    \includegraphics[width=\textwidth]{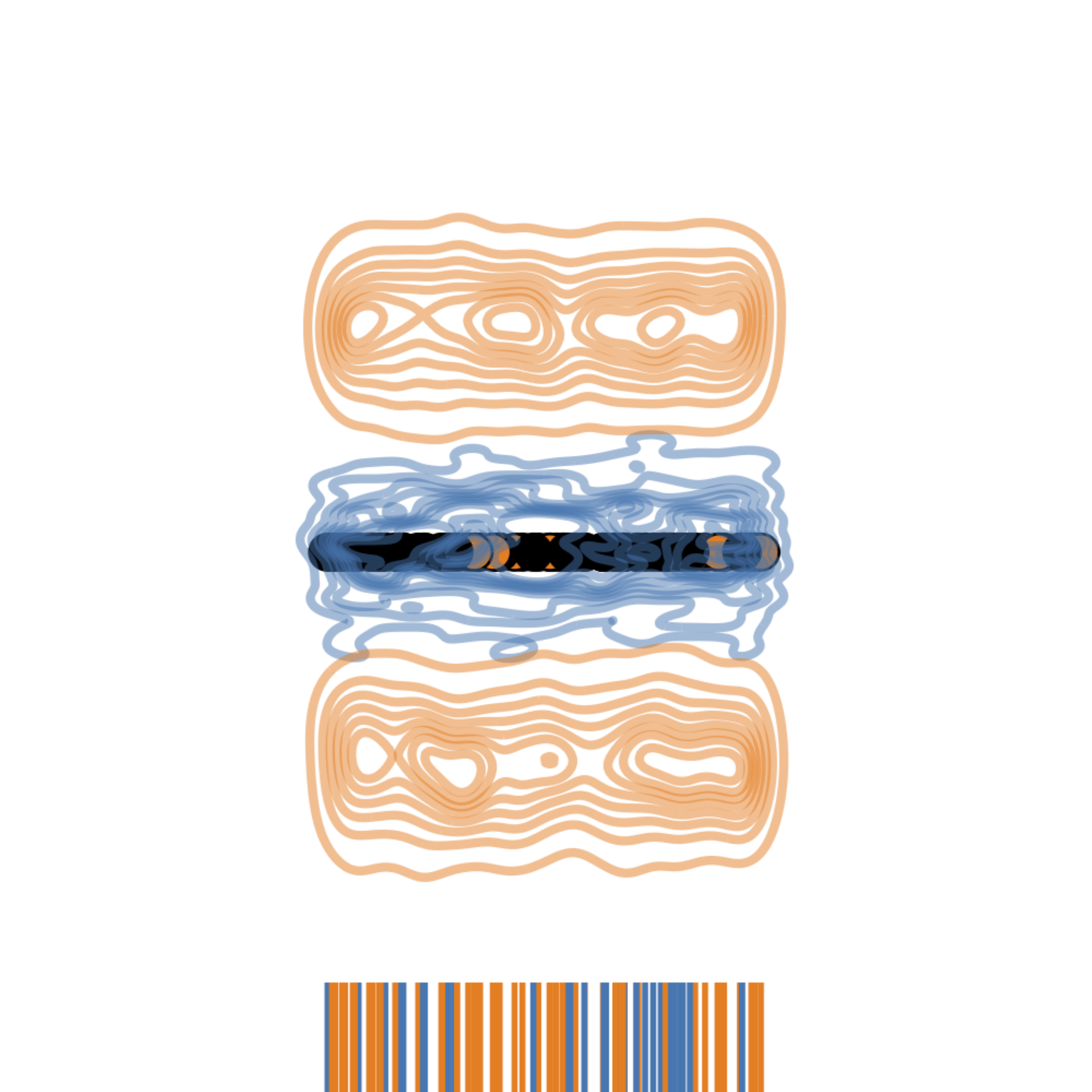}
    \includegraphics[width=\textwidth]{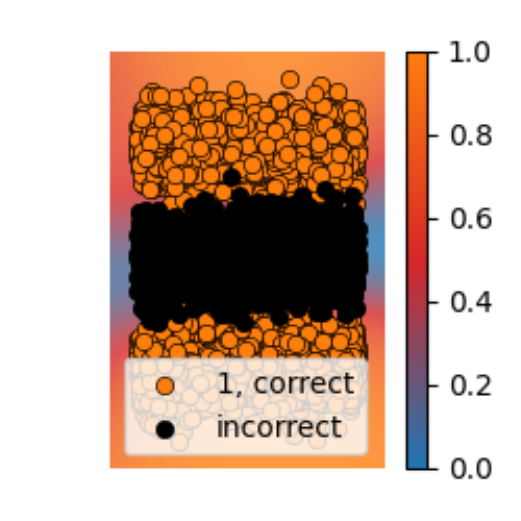}
    \caption{Mean}
\end{subfigure}\hspace{-0.001\linewidth}
\begin{subfigure}{0.13\textwidth}
    \centering
    \includegraphics[width=\textwidth]{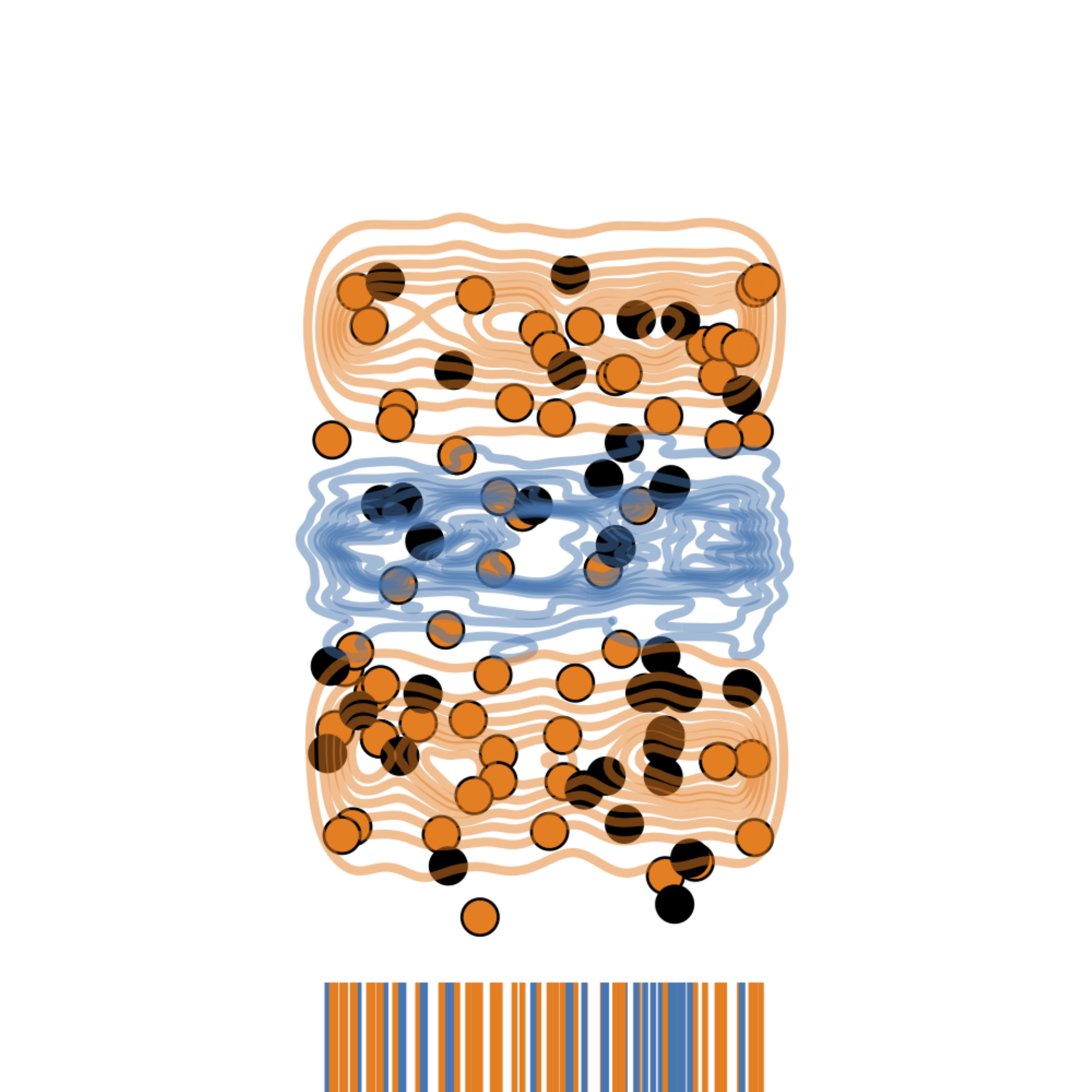}
    \includegraphics[width=\textwidth]{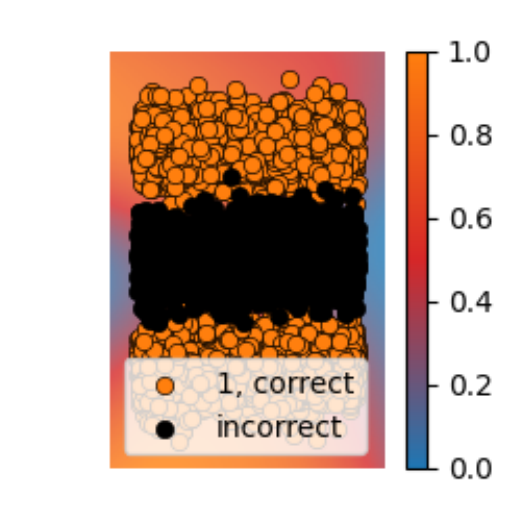}
    \caption{GAIN}
\end{subfigure}\hspace{-0.001\linewidth}
\begin{subfigure}{0.13\textwidth}
    \centering
    \includegraphics[width=\textwidth]{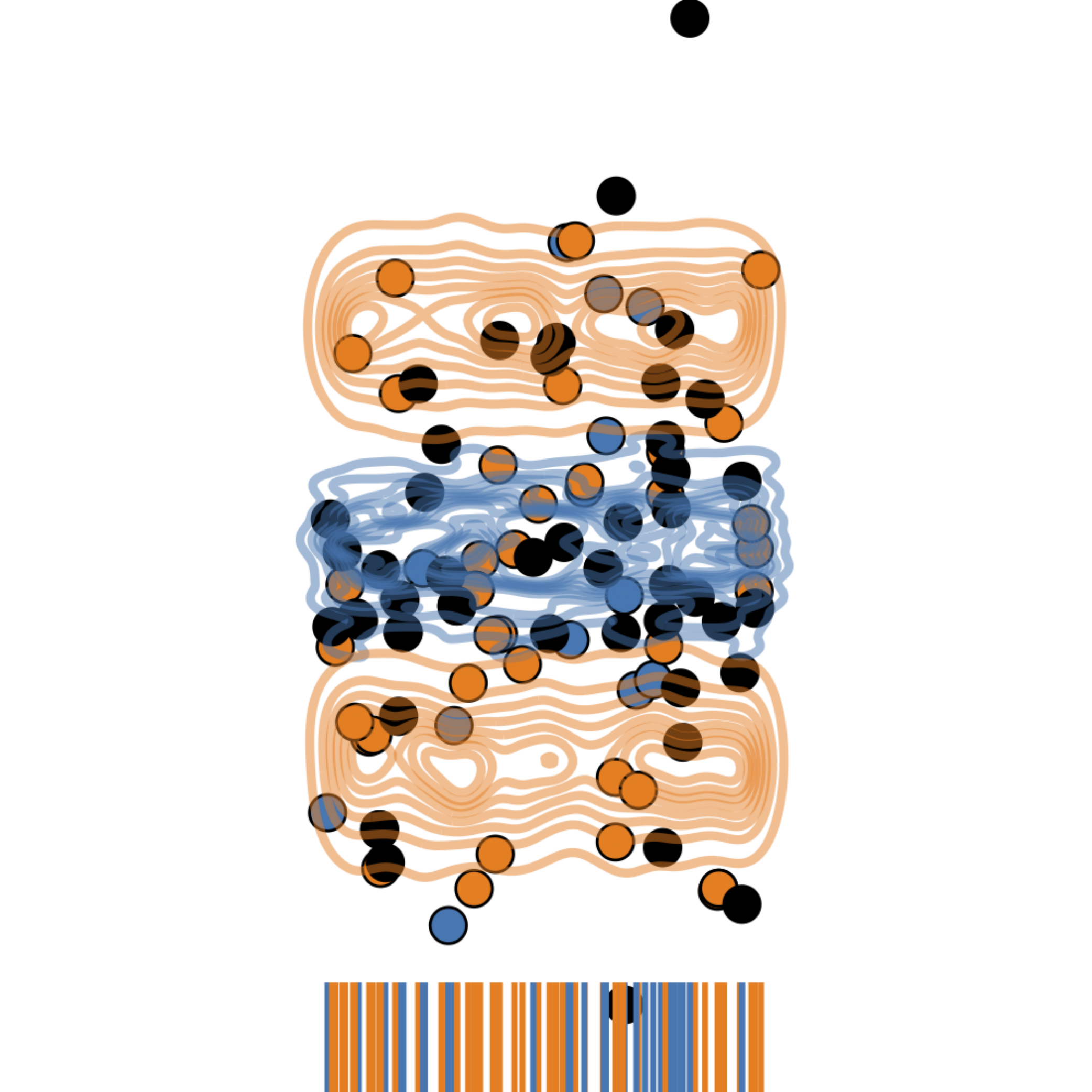}
    \includegraphics[width=\textwidth]{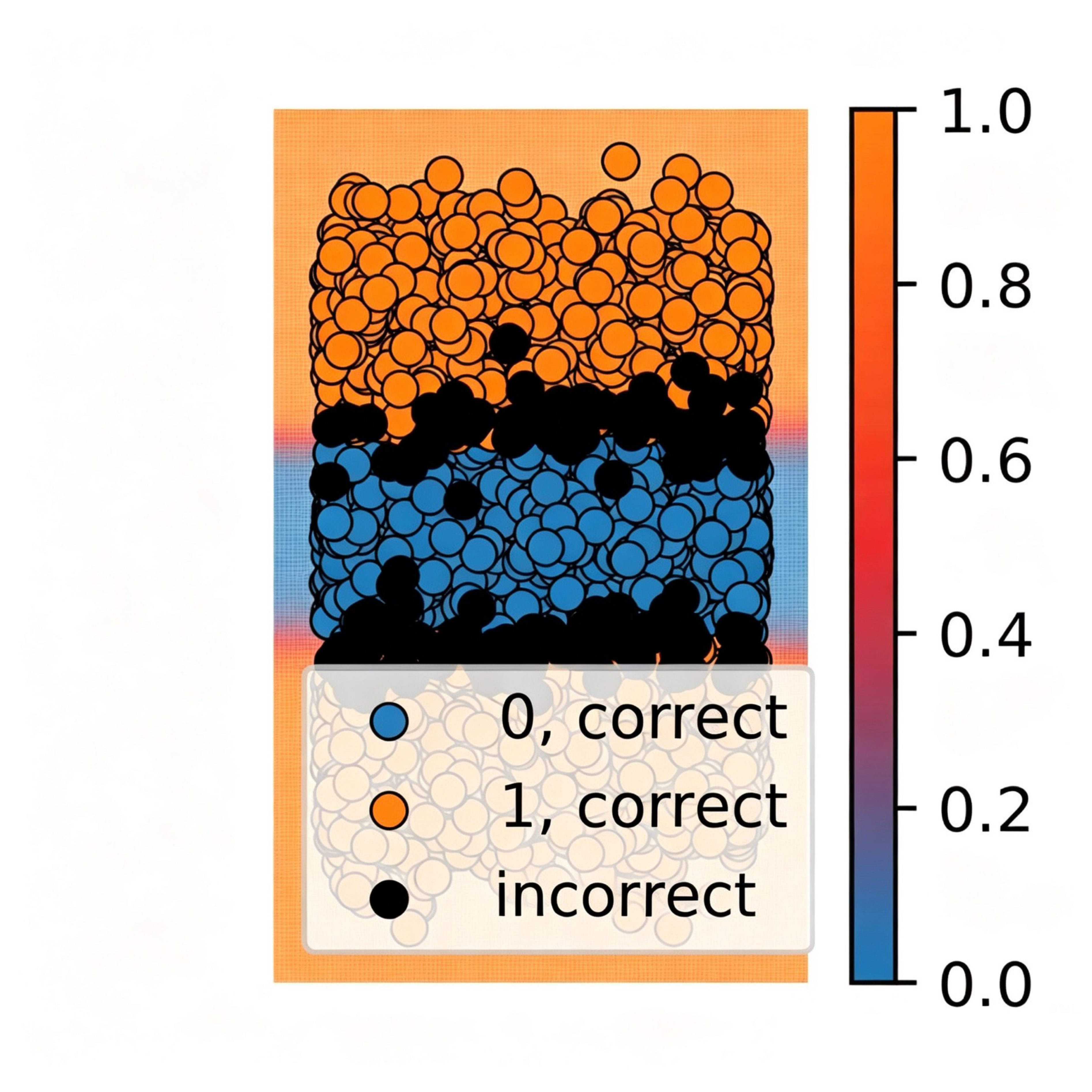}
    \caption{SupMIWAE}
\end{subfigure}\hspace{-0.001\linewidth}
\begin{subfigure}{0.13\textwidth}
    \centering
    \includegraphics[width=\textwidth]{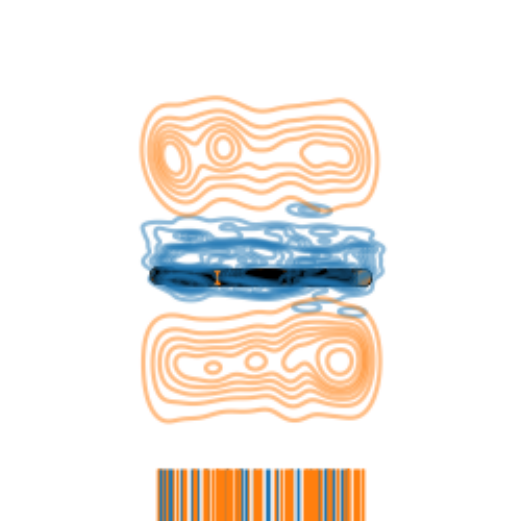}
    \includegraphics[width=\textwidth]{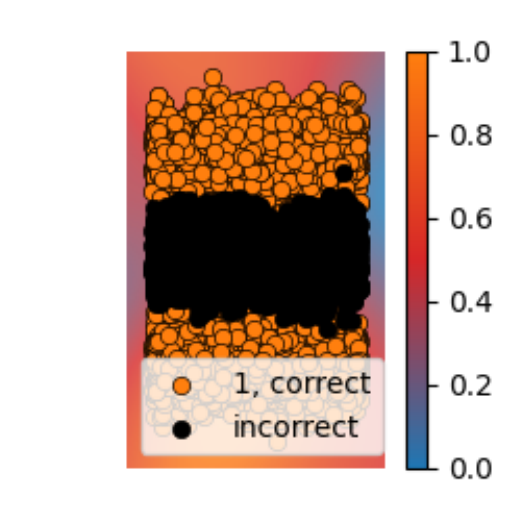}
    \caption{LAC}
\end{subfigure}\hspace{-0.001\linewidth}
\begin{subfigure}{0.13\textwidth}
    \centering
    \includegraphics[width=\textwidth]{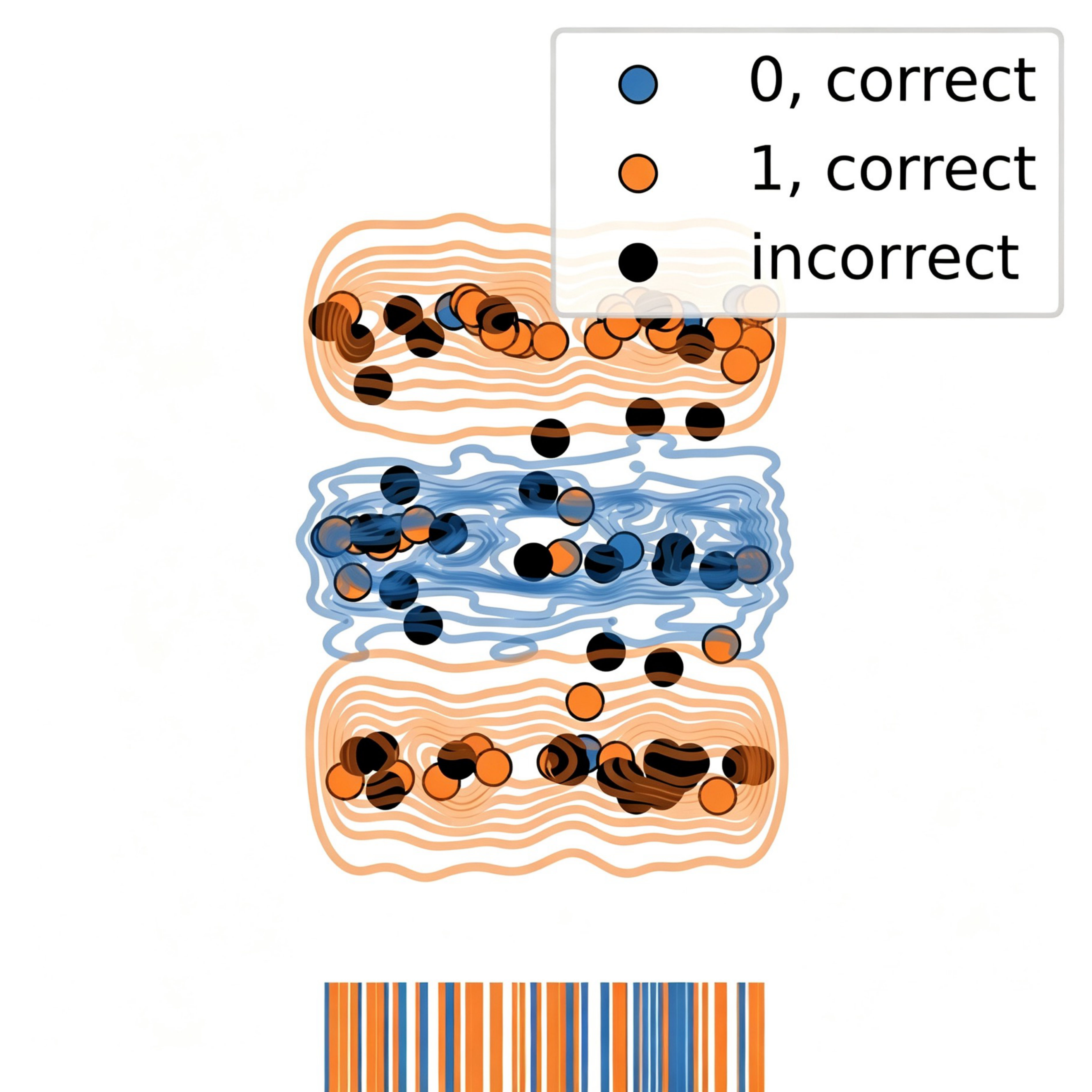}
    \includegraphics[width=\textwidth]{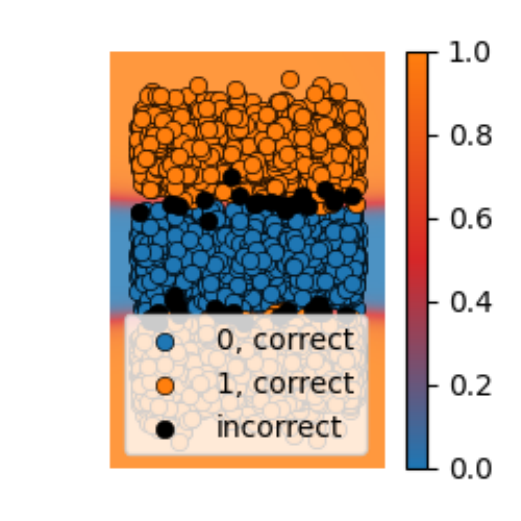}
    \caption{IICN}
\end{subfigure}
\end{minipage}

\vspace{6mm}

\begin{minipage}{\textwidth}
\centering

% -------- missing rate plot --------
\hspace*{-0.15\linewidth}
\begin{subfigure}[t]{0.47\linewidth}
  \centering\vspace{0pt}
  \includegraphics[height=3.0cm]{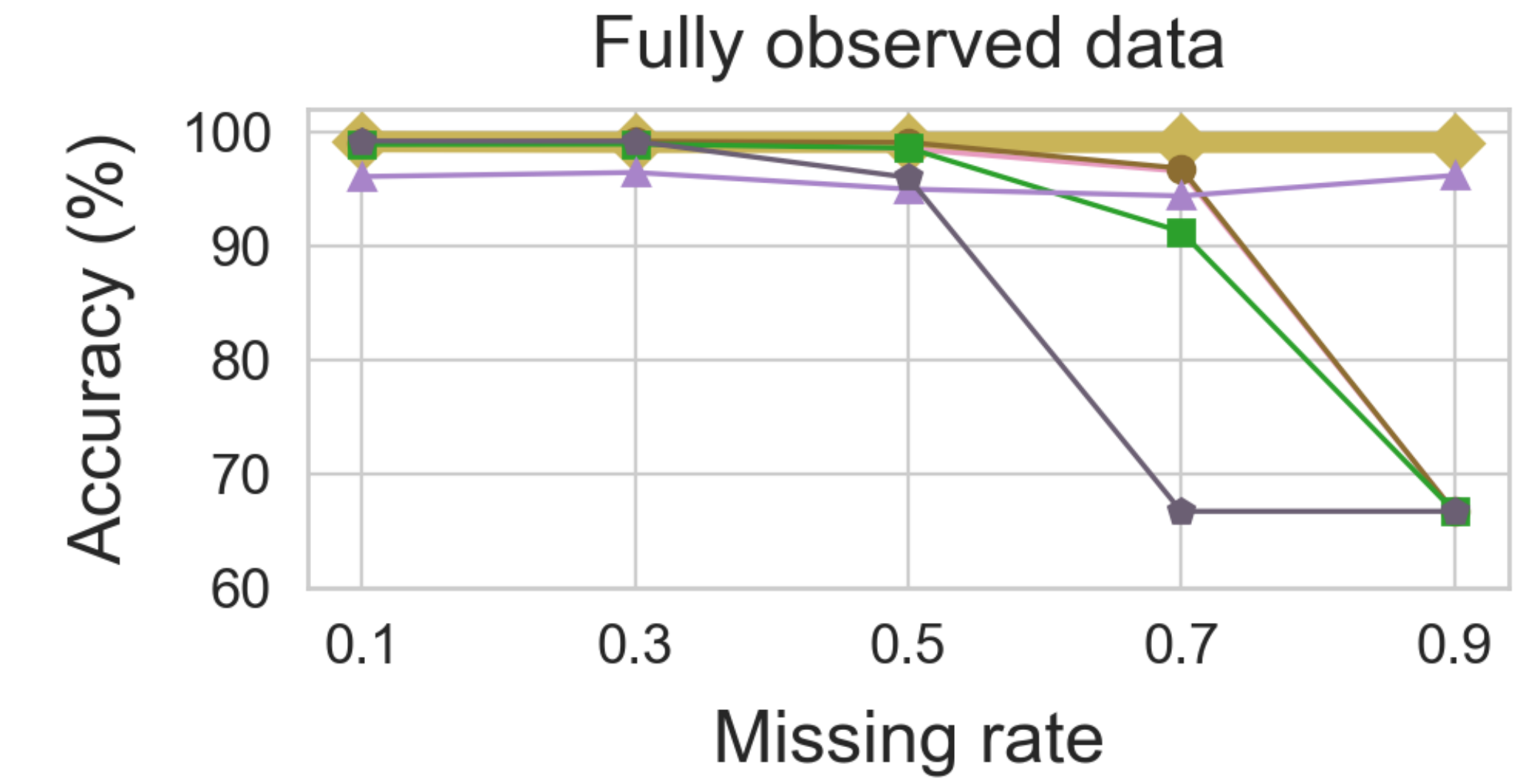}
  \caption*{\quad\quad\quad\quad Fully observed data}
\end{subfigure}\hspace{-0.05\linewidth}
\begin{subfigure}[t]{0.47\linewidth}
  \centering\vspace{0pt}
  \includegraphics[height=3.0cm]{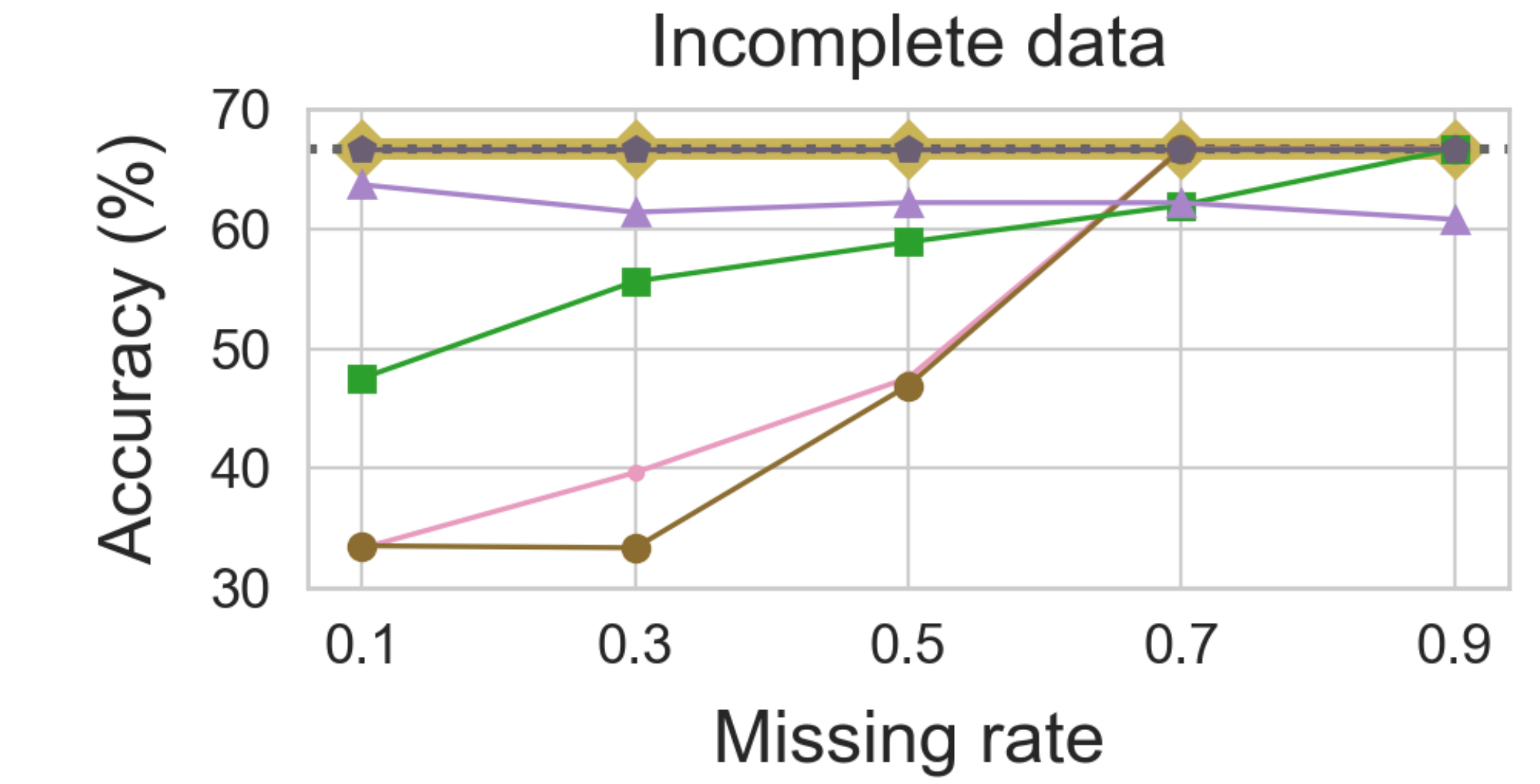}
  \caption*{\quad\quad\quad\quad Incomplete data}
\end{subfigure}\hspace{-0.03\linewidth}
\begin{subfigure}[t]{0.05\linewidth}
  \centering\vspace{0pt}
  \includegraphics[height=3.3cm]{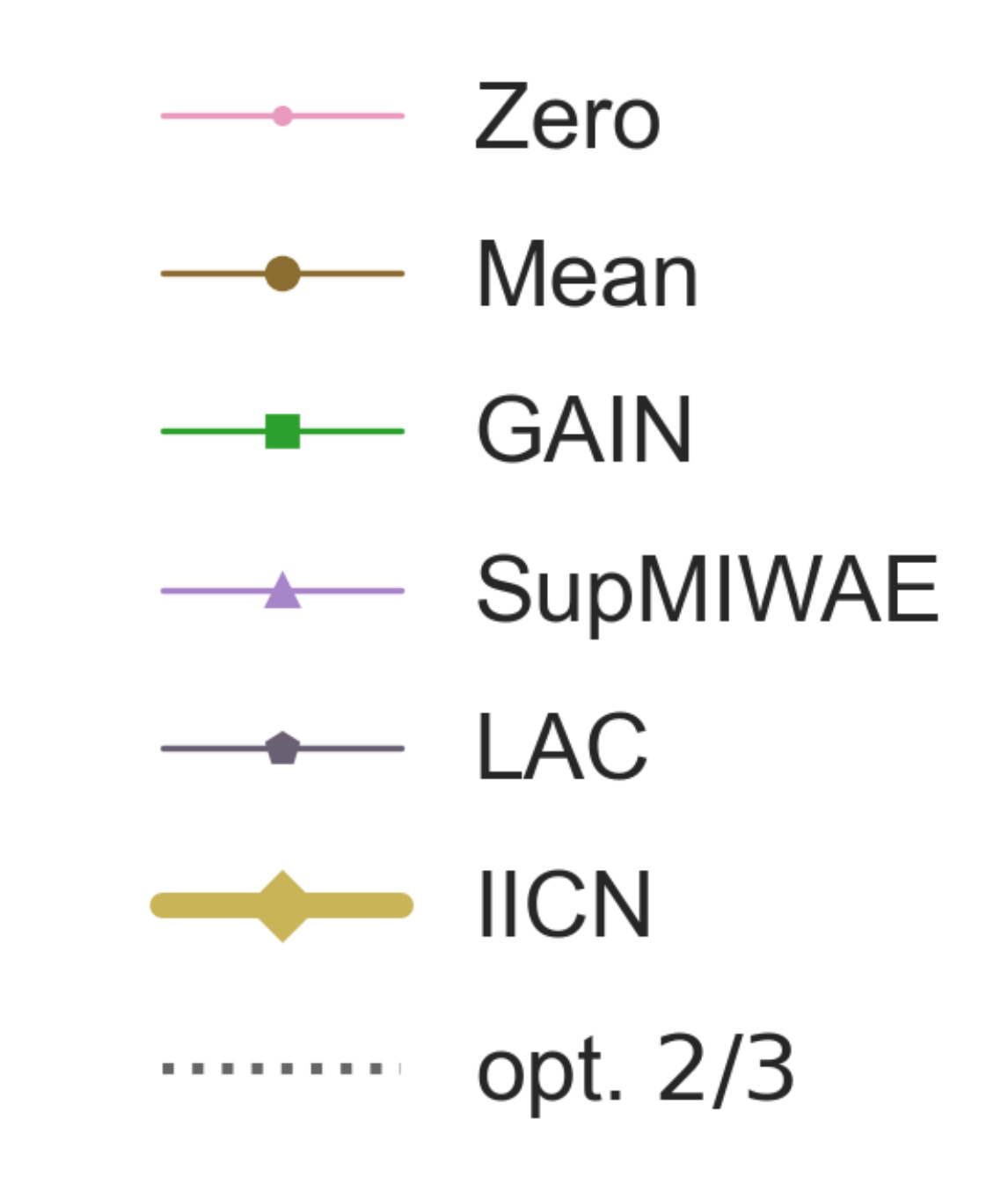}
\end{subfigure}

\end{minipage}
%caption{\textbf{Top row:} Data distribution, decision boundaries, and sample-level visualizations for different methods. The dataset is a two-class problem with $y=0$ (blue) and $y=1$ (orange). Missingness is introduced completely at random along the second dimension with $P(\text{miss})=0.9$. Observed values of incomplete samples are shown as colored sticks at the bottom, and black points indicate misclassified samples. Due to the intrinsic ambiguity of incomplete observations, the Bayes-optimal accuracy on incomplete data is upper-bounded by $P(y=1\mid \xx_{\mathrm{obs}})=2/3$. \\ \textbf{Bottom row:} Classification accuracy on fully observed (left) and incomplete (right) datapoints under varying missing rates $P(\text{miss})\in\{0.1,0.5,0.9\}$.The dashed line indicates the theoretical optimum on incomplete data. Both IICN and SupMIWAE use multiple imputations with $K=50$.}
\caption{Results on the synthetic Burger dataset.
\textbf{Top:} Data distribution, decision boundaries, and sample-level predictions for different imputation methods for an example with missing rate 0.9. The dataset is a two-class problem with $Y = 0$ (blue) and $Y = 1$ (orange), while black points indicate misclassified samples. Observed values of incomplete samples are shown as colored sticks at the bottom. \textbf{Bottom:} Classification accuracy on fully observed (left) and incomplete (right) datapoints under varying missing rates $\in \{0.1, 0.3, 0.5, 0.7, 0.9\}$. The dashed line indicates the Bayes-optimal accuracy on incomplete data.}
\label{fig:burger}
%\vspace{-6mm}
  \end{figure}

%In particular, from the analysis of the Burger dataset, when there is a missing 2nd dimension, the optimal classifier is known as $P(y=1\mid x_1)=\frac{2}{3}$. For a 2-dimensional dataset, having the observed values on only the 1st dimension is not enough to infer the values for the 2nd dimension. There is a great deal of randomness in inferring the values of the 2nd dimension given the values of the 1st dimension.  Thus, those methods claimed to impute the best, e.g., GAIN, instead of recovering the complete datapoint, introduce a great deal of randomness in single imputation and then break the classification performance. In the bottom row of Figure~\ref{fig:burger}, when $P=0.9$, it tends to classify all datapoints to be label 1 and when $P=0.1$, it classifies those incomplete datapoints not much better than random guessing. Classification performance with mean imputation also suffers from the degree of missingness, especially as the classification performance degrades with increasing degree of missingness. 
%However, methods incorporating multiple imputations show consistency regardless of the different degrees of missingness as indicated in the bottom row of Figure~\ref{fig:burger}. Meanwhile, with the same number of multiple imputations, IICN has better classification performance than SupMIWAE. Also note that IICN could ideally yield the test accuracy 0.667 on the Burger dataset with the 2nd dimension missing, which confirms the correctness of our previous theoretical analysis in Equation~\ref{eq: classifier}

\paragraph{Real-world dataset with synthetic missingness}
%We consider introducing different manipulations to manually control the degree of missingness for the real-world dataset. We denote by \( P_{\mathrm{obs}}\) the proportion of complete data points in the dataset. For incomplete data points, each feature is independently missing with probability \( P_{\mathrm{miss}}\). 
For FashionMNIST and CIFAR-10, we control the proportion of complete datapoints via $P_{\mathrm{obs}}$, and independently mask each dimension of incomplete datapoints with probability $P_{\mathrm{miss}}$. Tables~\ref{tab:fashionmnist} and~\ref{tab:cifar10} report classification accuracy on FashionMNIST and CIFAR-10 under varying missingness settings. Across both datasets, IICN mostly achieves the highest accuracy on incomplete datapoints, substantially outperforming observed-only baselines and classification-independent imputation methods. The performance gap becomes more pronounced as the proportion of missing data increases, highlighting the benefit and robustness of task-aligned adversarial training under severe missingness.  %On fully observed datapoints, all methods exhibit comparable performance, indicating that IICN does not degrade classification accuracy when complete information is available. The consistent trends observed across both FashionMNIST and CIFAR-10 demonstrate that the advantages of IICN generalize to high-dimensional image data and are robust across different missingness configurations. Meanwhile, despite consistently achieving the best classification performance, IICN does not always yield the lowest MSE in image reconstruction (Tables~\ref{tab:fashion_mse},~\ref{tab:cifar_mse} in Appendix~\ref{app:experiment}), indicating that imputation quality measured by MSE is not necessarily aligned with predictive accuracy.

\begin{table}
\setlength\tabcolsep{4pt}
\renewcommand{\arraystretch}{1.1}
\resizebox{\textwidth}{!}{
\begin{tabular}{cc|>{\columncolor{gray!10}}c
>{\columncolor{gray!10}}c
>{\columncolor{gray!10}}c
>{\columncolor{gray!10}}c
>{\columncolor{gray!10}}c
>{\columncolor{gray!10}}c
|cccccc
%>{\color{black!60}}c
c}
\toprule
\multicolumn{2}{c|}{\textbf{Missing Rate}} & 
\multicolumn{6}{c|}{\cellcolor{gray!10}{\textbf{Incomplete data}}}& 
\multicolumn{7}{c}{\textbf{Fully observed data}} 
\\
\textbf{$P_\obs$} & \textbf{$P_\miss$} 
& Zero & Mean & GAIN & SupMIWAE & LAC & IICN 
& Zero & Mean & GAIN & SupMIWAE & LAC &  IICN & \textit{CO}$^\dagger$ \\
\midrule
\multirow{2}{*}{$0.1$} 
& $0.5$ &$83.10\scriptstyle\pm.70$&$85.69\scriptstyle\pm.70$ & $86.40\scriptstyle\pm.90$ & $86.81\scriptstyle\pm.70$ &
$85.88\scriptstyle\pm.70$
& \cellcolor{gray!35}$\textbf{88.01}\scriptstyle\pm.60$  & 

  $86.97\scriptstyle\pm.70$
&$87.27\scriptstyle\pm.68$&$87.38\scriptstyle\pm.80$&$89.20\scriptstyle\pm1.90$
&$87.01 \scriptstyle\pm.70$
&\cellcolor{gray!35}$\textbf{89.13}\scriptstyle\pm.60$

& \multirow{2}{*}{$86.38\scriptstyle\pm.70$} \\

&$0.8$&$78.45\scriptstyle\pm.80$&$79.78\scriptstyle\pm.80$ & $81.18\scriptstyle\pm.80$ & $83.40\scriptstyle\pm.80$ 
&$81.64\scriptstyle\pm.80$
& \cellcolor{gray!35}$\textbf{83.06}\scriptstyle\pm.70$  & 

$84.64\scriptstyle\pm.70$&$84.24\scriptstyle\pm.70$&$86.09\scriptstyle\pm.70$&$88.30\scriptstyle\pm2.00$
&$84.46\scriptstyle\pm.70$
&\cellcolor{gray!35}$\textbf{86.06}\scriptstyle\pm.70$ &  \\
\midrule
\multirow{2}{*}{$0.5$} 
& $0.5$ &  $83.00\scriptstyle\pm.70$&$83.89\scriptstyle\pm.70$ & $86.58\scriptstyle\pm.70$ & $88.16\scriptstyle\pm.90$ 
&$84.69\scriptstyle\pm.70$
& \cellcolor{gray!35}$\textbf{88.59}\scriptstyle\pm.60$   &

 $88.22\scriptstyle\pm.60$&$88.26\scriptstyle\pm.60$&$89.17\scriptstyle\pm.60$&$87.86\scriptstyle\pm.90$
 &$88.02\scriptstyle\pm.60$
 &\cellcolor{gray!35}$\textbf{90.01}\scriptstyle\pm.60$
 &\multirow{2}{*}{ $88.90\scriptstyle\pm.60$}\\

&$0.8$ & $77.11\scriptstyle\pm.80$&$78.32\scriptstyle\pm.80$ & $80.38\scriptstyle\pm.80$ & $83.56\scriptstyle\pm1.00$ 
&$79.96\scriptstyle\pm.80$
& \cellcolor{gray!35}$\textbf{83.69}\scriptstyle\pm.70$ & 

  $87.58\scriptstyle\pm.61$& $88.02\scriptstyle\pm.60$& $88.01\scriptstyle\pm.60$ &$88.14\scriptstyle\pm.90$
  &$87.01\scriptstyle\pm.70$
  &  \cellcolor{gray!35}$\textbf{89.23}\scriptstyle\pm.60$ &\\
\midrule
\multirow{2}{*}{$0.9$} 
& $0.5$&$80.11\scriptstyle\pm.80$&$81.06\scriptstyle\pm.80$ & $85.12\scriptstyle\pm.70$ &  \cellcolor{gray!35}$\textbf{88.10}\scriptstyle\pm2.00$ 
&$86.68\scriptstyle\pm.70$
& ${87.04}\scriptstyle\pm.70$&

 $88.38\scriptstyle\pm.60$& $87.79\scriptstyle\pm.60$&$89.29\scriptstyle\pm.60$&$88.32\scriptstyle\pm.70$
 &$89.41\scriptstyle\pm.60$
 &\cellcolor{gray!35}$\textbf{89.42}\scriptstyle\pm.60$&   \\

& $0.8$
& $72.87\scriptstyle\pm.90$&$73.60\scriptstyle\pm.90$ & $76.70\scriptstyle\pm1.10$ & \cellcolor{gray!35}$\textbf{82.10}\scriptstyle\pm2.40$ 
&$75.17\scriptstyle\pm.80$
& $81.30\scriptstyle\pm.80$ &

$89.18\scriptstyle\pm.60$& $89.22\scriptstyle\pm.60$ & $88.12\scriptstyle\pm.80$ &$87.68\scriptstyle\pm.70$
&$88.50\scriptstyle\pm.60$
& \cellcolor{gray!35}$\textbf{88.77}\scriptstyle\pm.60$&\multirow{-2}{*}{$89.43{\scriptstyle\pm.60}$} \\
\bottomrule
\end{tabular}}
\vspace{2mm}
\caption{Table shows the test accuracy ($\%$) of different methods on complete images and incomplete images with different missing probabilities on FashionMNIST.}
\label{tab:fashionmnist}
%\vspace{0.5mm}%\end{table}
\vspace{3mm}
%\begin{table}
\centering
\setlength\tabcolsep{4pt}
\renewcommand{\arraystretch}{1.1}
\resizebox{\textwidth}{!}{
\begin{tabular}{cc|>{\columncolor{gray!10}}c
>{\columncolor{gray!10}}c
>{\columncolor{gray!10}}c
>{\columncolor{gray!10}}c
>{\columncolor{gray!10}}c
>{\columncolor{gray!10}}c
|ccccccc
%>{\color{black!60}}c
c}
\toprule
\multicolumn{2}{c|}{\textbf{Missing rate}} & 
\multicolumn{6}{c|}{\cellcolor{gray!10}{\textbf{Incomplete data}}}& 
\multicolumn{7}{c}{\textbf{Fully observed data}} 
\\
\textbf{$P_\obs$} & \textbf{$P_\miss$} 
& Zero & Mean & GAIN & SupMIWAE & LAC & IICN 
& Zero & Mean & GAIN & SupMIWAE & LAC & IICN &\textit{CO}$^\dagger$ \\
\midrule

\multirow{2}{*}{$0.1$} 
& $0.5$ 
& $61.65\scriptstyle\pm0.99$
& $61.52\scriptstyle\pm0.99$  
& $67.60\scriptstyle\pm0.90$  
& $69.06\scriptstyle\pm1.00$
& $72.36\scriptstyle\pm0.90$
& \cellcolor{gray!35}{$\textbf{75.92}\scriptstyle\pm1.10$}
& $69.44\scriptstyle\pm0.90$
& $69.48\scriptstyle\pm0.90$  
& $72.01\scriptstyle\pm0.90$  
& $73.00\scriptstyle\pm2.80$ 
& $73.30\scriptstyle\pm0.90$
& \cellcolor{gray!35}{$\textbf{78.98}\scriptstyle\pm1.10$}
& \multirow{2}{*}{\centering $59.15\scriptstyle\pm0.95$}
\\

& $0.8$ 
& $53.42\scriptstyle\pm1.00$
& $51.83\scriptstyle\pm1.00$ 
& $41.44\scriptstyle\pm1.00$ 
& $54.40\scriptstyle\pm1.00$ 
& $61.23\scriptstyle\pm1.00$ 
& \cellcolor{gray!35}{$\textbf{68.38}\scriptstyle\pm1.30$}
& $64.72\scriptstyle\pm0.83$
& $63.04\scriptstyle\pm0.91$ 
& $53.14\scriptstyle\pm1.00$   
& $67.40\scriptstyle\pm2.90$ 
& $66.63\scriptstyle\pm0.90$ 
& \cellcolor{gray!35}{$\textbf{76.35}\scriptstyle\pm1.40$}
& 
\\

\midrule

\multirow{2}{*}{$0.5$} 
& $0.5$ 
& $58.87\scriptstyle\pm1.00$
& $59.31\scriptstyle\pm1.00$ 
& $63.68\scriptstyle\pm0.90$ 
& $66.18\scriptstyle\pm1.30$  
& $73.67\scriptstyle\pm0.90$ 
& \cellcolor{gray!35}{$\textbf{74.77}\scriptstyle\pm0.90$}
& $69.44\scriptstyle\pm0.90$
& $73.23\scriptstyle\pm0.90$ 
& $73.68\scriptstyle\pm0.90$ 
& $76.78\scriptstyle\pm1.20$ 
& $76.72\scriptstyle\pm0.90$ 
& \cellcolor{gray!35}{$\textbf{78.27}\scriptstyle\pm0.80$}
& \multirow{2}{*}{\centering $75.50\scriptstyle\pm0.88$}
\\

& $0.8$ 
& $48.15\scriptstyle\pm1.00$
& $46.31\scriptstyle\pm1.00$ 
& $35.04\scriptstyle\pm0.90$ 
& $50.52\scriptstyle\pm1.40$  
& $64.52\scriptstyle\pm1.00$ 
& \cellcolor{gray!35}{$\textbf{67.00}\scriptstyle\pm1.10$}
& $70.14\scriptstyle\pm0.90$
& $69.36\scriptstyle\pm0.91$ 
& $66.10\scriptstyle\pm0.90$ 
& $74.22\scriptstyle\pm1.20$ 
& $73.64\scriptstyle\pm0.90$ 
& \cellcolor{gray!35}{$\textbf{76.00}\scriptstyle\pm1.20$}
& 
\\

\midrule

\multirow{2}{*}{$0.9$} 
& $0.5$ 
& $47.98\scriptstyle\pm1.00$
& $49.08\scriptstyle\pm1.00$
& $49.36\scriptstyle\pm1.00$ 
& $61.10\scriptstyle\pm3.00$ 
& $65.05\scriptstyle\pm0.90$ 
& \cellcolor{gray!35}{$\textbf{76.00}\scriptstyle\pm1.00$}
& $76.26\scriptstyle\pm0.80$
& $76.80\scriptstyle\pm0.80$ 
& $74.55\scriptstyle\pm0.90$ 
& $77.01\scriptstyle\pm0.90$ 
& $75.68\scriptstyle\pm0.80$ 
& \cellcolor{gray!35}{$\textbf{78.48}\scriptstyle\pm1.10$}
& \multirow{2}{*}{\centering $78.00\scriptstyle\pm0.67$}
\\
& $0.8$ 
& $33.85\scriptstyle\pm0.90$
& $33.72\scriptstyle\pm0.90$ 
& $38.99\scriptstyle\pm1.00$ 
& $44.80\scriptstyle\pm3.10$   
& $50.67\scriptstyle\pm1.00$ 
& \cellcolor{gray!35}{$\textbf{53.84}\scriptstyle\pm1.14$}
& $75.00\scriptstyle\pm0.83$
& $75.73\scriptstyle\pm0.80$
& $75.65\scriptstyle\pm0.80$ 
& $76.38\scriptstyle\pm0.90$ 
& $76.47\scriptstyle\pm0.80$ 
& \cellcolor{gray!35}{$\textbf{78.12}\scriptstyle\pm1.13$}
&
\\

\bottomrule
\end{tabular}}
\vspace{2mm}
\caption{Table shows the test accuracy ($\%$) of different methods on complete images and incomplete images with different missing probabilities on CIFAR-10.}
\label{tab:cifar10}
%\vspace{mm}
%\end{table}
\vspace{3mm}
%\begin{table}[h]
\centering
\scriptsize
\setlength{\tabcolsep}{4pt}
\renewcommand{\arraystretch}{1.05}
\resizebox{1.0\textwidth}{!}{
\begin{tabular}{
 lcccc
 %>{\color{black!60}}c
 cccccc}
\toprule
{\textbf{Dataset}} & \textbf{\#Feat.} & \textbf{\centering \#Samp.} & \textbf{\centering \#Comp.} &\textbf{\centering $P_{\mathrm{miss}}$} &
\textit{CO}$^\dagger$&Mean&GAIN&SupMIWAE&LAC&IICN \\
\midrule

\multirow{2}{*}{Mammographic Mass} 
 & \multirow{2}{*}{5} & \multirow{2}{*}{961} & \multirow{2}{*}{831} & \multirow{2}{*}{0.2473} & $82.55\scriptstyle\pm0.104$
 & $82.66\scriptstyle\pm0.205$ & $82.55\scriptstyle\pm0.201$ & $81.56\scriptstyle\pm0.261$&$81.80\scriptstyle\pm0.147$ & $\cellcolor{gray!35}\textbf{83.15}\scriptstyle\pm0.149$ \\
 
 &  &  &  &  &\cellcolor{gray!10}{--}&\cellcolor{gray!10}{$80.91\scriptstyle\pm1.594$} &\cellcolor{gray!10}{$70.84\scriptstyle\pm1.773$} & \cellcolor{gray!10}{$78.01\scriptstyle\pm1.592$} &\cellcolor{gray!10}$79.25\scriptstyle\pm1.643$& $\cellcolor{gray!35}\textbf{82.86}\scriptstyle\pm1.173$ \\[2pt]

\midrule
\multirow{2}{*}{Diabetes} 
 & \multirow{2}{*}{8} & \multirow{2}{*}{768} & \multirow{2}{*}{336} & \multirow{2}{*}{0.2207} & $75.37\scriptstyle\pm0.436$
 & $77.84\scriptstyle\pm0.365$ & $77.20\scriptstyle\pm0.491$ & $77.21\scriptstyle\pm0.472$&$77.28\scriptstyle\pm0.322$ & \cellcolor{gray!35}$\textbf{78.11}\scriptstyle\pm0.563$ \\
 
 &  &  &  &  &\cellcolor{gray!10}{--}& \cellcolor{gray!10}$75.11\scriptstyle\pm0.302$ & \cellcolor{gray!10}$74.95\scriptstyle\pm0.194$ & \cellcolor{gray!10}$72.42\scriptstyle\pm0.362$&\cellcolor{gray!10}$74.52\scriptstyle\pm0.134$ & $\cellcolor{gray!35}\textbf{77.26}\scriptstyle\pm0.223$ \\[2pt]

\midrule
\multirow{2}{*}{Hepatitis} 
 & \multirow{2}{*}{19} & \multirow{2}{*}{155} & \multirow{2}{*}{80} & \multirow{2}{*}{0.1001} & $73.12\scriptstyle\pm2.585$
 & $76.47\scriptstyle\pm2.00$ & $75.12\scriptstyle\pm2.379$ & $72.76\scriptstyle\pm3.593$ &$77.12\scriptstyle\pm2.320$&\cellcolor{gray!35}$\textbf{77.98}\scriptstyle\pm3.397$ \\
 &  &  &  &  &\cellcolor{gray!10}{--} &\cellcolor{gray!10}$60.00\scriptstyle\pm1.655$ & \cellcolor{gray!10}$61.23\scriptstyle\pm1.874$ & \cellcolor{gray!35}$\textbf{73.18}\scriptstyle\pm3.267$&$71.59\scriptstyle\pm2.052$ & $\cellcolor{gray!10}68.36\scriptstyle\pm2.656$ \\[2pt]
 
  \midrule
\multirow{2}{*}{Thyroid} 
 & \multirow{2}{*}{21} & \multirow{2}{*}{3772} & \multirow{2}{*}{2642} & \multirow{2}{*}{0.0966} & $96.44\scriptstyle\pm0.008$
 & $95.85\scriptstyle\pm0.037$ & 79.52 $\scriptstyle\pm1.468$ & $92.00\scriptstyle\pm0.010$ &$95.11\scriptstyle\pm0.031$&\cellcolor{gray!35}$\textbf{95.53}\scriptstyle\pm0.013$ \\
 &  &  &  &  &\cellcolor{gray!10}{--} &\cellcolor{gray!10}$94.86\scriptstyle\pm0.352$ & \cellcolor{gray!10}$86.65\scriptstyle\pm0.973$ & \cellcolor{gray!10}$97.13\scriptstyle\pm0.013$&$98.31\scriptstyle\pm0.032$ & $\cellcolor{gray!35}\textbf{98.38}\scriptstyle\pm0.012$ \\[2pt]
 \bottomrule
%\midrule
%\multirow{2}{*}{Nomao} 
%& \multirow{2}{*}{83} & \multirow{2}{*}{34465} & \multirow{2}{*}{9159} & \multirow{2}{*}{0.1172} & $78.12\scriptstyle\pm11.32$
% & $74.12\scriptstyle\pm15.33$ & $70.50\scriptstyle\pm11.54$ & $63.23\scriptstyle\pm15.21$ && \cellcolor{gray!35}$\textbf{75.27}\scriptstyle\pm11.94$ \\
% &  &  &  &  &\cellcolor{gray!10}{--} &\cellcolor{gray!10}$70.28\scriptstyle\pm07.42$ & \cellcolor{gray!10}$60.21\scriptstyle\pm04.98$ & \cellcolor{gray!10}$55.30\scriptstyle\pm04.42$& \cellcolor{gray!10}& \cellcolor{gray!35}$\textbf{70.39}\scriptstyle\pm12.07$ \\[2pt]
\end{tabular}
}
\vspace{2mm}%\midrule
%\multirow{2}{*}{Nomao} 
%& \multirow{2}{*}{83} & \multirow{2}{*}{34465} & \multirow{2}{*}{9159} & \multirow{2}{*}{0.1172} & $78.12\scriptstyle\pm11.32$
% & $74.12\scriptstyle\pm15.33$ & $70.50\scriptstyle\pm11.54$ & $63.23\scriptstyle\pm15.21$ && \cellcolor{gray!35}$\textbf{75.27}\scriptstyle\pm11.94$ \\
% &  &  &  &  &\cellcolor{gray!10}{--} &\cellcolor{gray!10}$70.28\scriptstyle\pm07.42$ & \cellcolor{gray!10}$60.21\scriptstyle\pm04.98$ & \cellcolor{gray!10}$55.30\scriptstyle\pm04.42$& \cellcolor{gray!10}& \cellcolor{gray!35}$\textbf{70.39}\scriptstyle\pm12.07$ \\[2pt]
 
\caption{
Table shows the test accuracy (\%) on complete~(first row) and imputed~(second row) data across UCI datasets. 
Each dataset is evaluated using cross-validation ensemble.}
\label{tab:uci}
%\vspace{-6mm}
\end{table}

\paragraph{Real-world missing data} %We further evaluate all methods on the UCI benchmark datasets with natural missing values. Unlike artificially masked data, these datasets contain real-world missingness caused by uncontrolled data collection, where both the distribution and mechanism of missingness are unknown. As these datasets are relatively small, training deep models tends to suffer from high variance, overfitting, and unstable generalization\citep{vapnik1998statistica}, which reflect the classic bias–variance trade-off in small-sample regimes\citep{hastie2009elements}. Consistent with this, deep networks trained on limited data often exhibit unstable generalization and sensitivity to initialization \citep{zhang2017understanding, guo2017calibration}. Prior studies have further shown that ensemble and Bayesian approaches can provide more stable and better-calibrated predictions under such data scarcity and distributional uncertainty \citep{maddox2019simple}. To mitigate this issue, we adopt a cross-validation ensemble which effectively reduces variance and stabilizes model performance in limited-data regimes.
%Beyond variance reduction on the classifier side, we also introduce a multiple-imputation strategy specifically for datasets that are simultaneously small and exhibit higher levels of missingness. In such settings, the generator receives limited supervision on the observed covariates and faces significant uncertainty when inferring the conditional distribution over missing dimensions. 

The UCI tabular datasets contain naturally occurring missing values. Since the underlying missingness mechanism is unknown, this setting better reflects realistic missing-data scenarios.
Table~\ref{tab:uci} shows that IICN effectively narrows the performance gap between complete and imputed inputs, and can even outperform the complete-only baseline. \section{Conclusion}
%Classification on missing data is a fundamental problem, as data loss or corruption frequently occurs in real-world applications. This scenario calls for a reliable supervised learning algorithm that can robustly handle varying degrees of missingness and predict correctly. We propose IICN, a novel model for missing data classification, which utilizes the adversarial framework to ensure that imputations are aligned with predictive performance, enabling the classifier to leverage more beneficial imputations. Experiments on both synthetic and real-world datasets demonstrate that IICN consistently outperforms state-of-the-art Impute-then-Classify methods in terms of classification accuracy and robustness across different levels of missingness. As shown in Table~\ref{tab:uci}, IICN consistently outperforms other methods across all datasets, achieving the highest classification accuracy, especially on datasets with moderate to high levels of missingness (e.g., Hepatitis and Mammographic Mass). The performance gap between IICN and Mean imputation is particularly large (up to +25$\%$), highlighting the importance of learning imputations that are aligned with the classification objective.
Classification with missing data is a fundamental challenge in real-world applications. We propose IICN, a task-aligned adversarial framework that learns imputations jointly with the downstream classification objective, enabling effective use of incomplete samples without degrading performance on fully observed data. Experiments on synthetic, image, and tabular datasets show that IICN consistently outperforms impute-then-classify and other joint optimization baselines across a wide range of missingness levels. These results highlight the importance of task-aligned imputation for robust classification under missing data.

\newpage
\bibliographystyle{configuration/iclr2021_conference.bst}
\bibliography{reference}
\newpage
\appendix
\newpage
\onecolumn
\section{Related work}\label{sec:related}
\paragraph{Classification under missing data.}
Classical approaches to missing data classification follow the \emph{impute-then-classify} pipeline, where missing values are first filled in via simple statistics (zero, mean, mode) or learned imputation models, and a classifier is then trained on the completed data~\citep{little1995statistical, vanbuuren2018flexible}. Modern imputation methods leverage deep generative models such as VAEs~\citep{mattei2019miwae, ipsen2021not} and GANs~\citep{gain} to capture complex feature distributions. However, treating imputation and classification as separate stages can produce imputations that are distributionally faithful but not necessarily classification-friendly~\citep{paterakis2024we, morvan2025imputation, josse2024consistency}. A line of recent work addresses this mismatch by jointly training imputers and classifiers, including GACN~\citep{gacn}, SupMIWAE~\citep{SupMIWAE}, and LAC~\citep{sugiyama2025discrete, sugiyama2025unified}, which  use varying combinations of adversarial losses, variational inference, and weighted reconstruction-classification objectives. IICN belongs to this prediction-aligned paradigm but introduces a class-aware adversarial formulation that is likelihood-free and provably recovers the Bayes-optimal classifier under MCAR.

\paragraph{Generative imputation models.}
Generative models for missing data have evolved along two main categories. Variational approaches model $p(\xx_{\miss}\mid \xx_{\obs})$ through an encoder-decoder architecture trained with the IWAE bound~\citep{mattei2019miwae, ipsen2021not}. SupMIWAE extends this framework by jointly training a classifier with the variational posterior, enabling importance-weighted marginalization at inference. Adversarial approaches train a generator to produce imputations indistinguishable from the real data distribution. In particular, GAIN uses a mask-aware discriminator~\citep{gain}, whereas MisGAN explicitly models both the data distribution and the missingness mechanism~\citep{li2018learning}. However, their training objectives are primarily defined in terms of distributional reconstruction. These methods optimize imputation quality without considering the downstream task, and they do not directly optimize the decision boundary of the downstream classifier.  IICN addresses this gap by combining adversarial imputation with task-aware discrimination in a single $(n+1)$-class objective, where the extra class captures imputation artifacts while the remaining classes estimate the posterior label distribution over completed inputs.

\paragraph{Semi-supervised GANs.}
The $(n{+}1)$-class adversarial formulation in IICN is inspired by semi-supervised GAN training~\citep{salimans2016improved, springenberg2015unsupervised, dai2017good}. In these methods, the unsupervised adversarial loss is consistent with the supervised classification loss, allowing labelled, unlabeled, and generated samples to jointly constrain the classifier. IICN uses an analogous output structure for a different purpose: the additional class represents incomplete or improperly imputed inputs, and the generator is trained to complete missing features from observed features rather than synthesize full samples from noise. In this sense, \cite{salimans2016improved} uses the generator to make semi-supervised classification more effective, whereas IICN uses the classifier-discriminator structure to make the imputation task-aware.

\paragraph{Theoretical analysis of classification under missingness.}
Recent studies have begun to characterize what the optimal predictor is in the presence of a missing value~\citep{bayes_optimal, neumiss, josse2024consistency, sugiyama2025discrete, sugiyama2025unified}. \citet{bayes_optimal} characterize the Bayes-optimal predictor as a marginalization over the conditional distribution $p(\xx_{\miss}\mid \xx_{\obs})$, and \citet{neumiss} approximate this predictor through a Neumann-series expansion under linear-Gaussian assumptions. \citet{sugiyama2025discrete, sugiyama2025unified} establish 
asymptotic risk bounds on the LAC training objective, but their guarantees apply only to the empirical loss rather than to the distance from the Bayes-optimal classifier. IICN provides a stronger theoretical result. Namely, at the adversarial equilibrium, the composition of the optimal generator and discriminator \emph{exactly} recovers the Bayes-optimal predictive rule under MCAR, without requiring distributional assumptions or variational approximations.

\section{Proofs}\label{sec:proof}
\subsection{Technical tools}
\begin{claim}\label{claim: max}
Consider a set of $n$-variables, $\xx=(\xx_1,...,\xx_n)\in(\mathbb{R}^{+})^n$, and the following constrained objective function $f(\xx):\ (\mathbb{R}^{+})^n\rightarrow \mathbb{R}$:$$f(\xx) = a_1\log \xx_1+...+a_n\log \xx_n,\ s.t\ \sum_i^n\xx_i=1,$$ where $a_i\in[0,1]$ and $\sum a_i=1.$ The maximum value is obtained when $\xx_i=a_i,\ \forall i\in[n]$.
 \end{claim}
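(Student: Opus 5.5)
The claim is Gibbs' inequality: the nonnegativity of the Kullback--Leibler divergence between the weight vector $(a_1,\dots,a_n)$ and the candidate $\xx$. I would prove it directly through concavity of the logarithm, which avoids Lagrange multipliers and boundary issues.

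Fix any feasible $\xx\in(\mathbb{R}^+)^n$ with $\sum_i \xx_i=1$. Let $S=\{i: a_i>0\}$, and use the convention $0\log 0=0$ so that the target value $f(\mathbf a)=\sum_{i\in S}a_i\log a_i$ is well defined. Then
\[
f(\xx)-f(\mathbf a)=\sum_{i\in S}a_i\log\frac{\xx_i}{a_i}\le \log\Big(\sum_{i\in S}a_i\cdot\frac{\xx_i}{a_i}\Big)=\log\Big(\sum_{i\in S}\xx_i\Big)\le \log 1=0 .
\]
The first inequality is Jensen's inequality for the concave function $\log$, applied with the probability weights $\{a_i\}_{i\in S}$. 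These weights sum to $1$ because $\sum_i a_i=1$. The second inequality uses $\sum_{i\in S}\xx_i\le\sum_i\xx_i=1$ together with the monotonicity of $\log$. Hence $f(\xx)\le f(\mathbf a)$ for every feasible $\xx$, so the supremum equals $f(\mathbf a)$.

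For uniqueness, which Thm.~\ref{thm:ds} needs, I would analyse when both inequalities become equalities. Jensen is tight only when $\xx_i/a_i$ is constant on $S$, because $\log$ is strictly concave. The second inequality is tight only when $\xx_i=0$ for all $i\notin S$. Combining these with $\sum_i\xx_i=1$ forces $\xx_i=a_i$ for all $i$, so the maximizer is unique.

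The only delicate point is the boundary. If some $a_i=0$, the maximizer $\xx=\mathbf a$ lies outside the open domain $(\mathbb{R}^+)^n$. In that case the maximum is only a supremum, approached but not attained. I would state the result for $\mathbf a$ in the closure of the domain, with the $0\log 0=0$ convention. This is the form used when the claim is applied pointwise, with $a_i=\mP(Y=i,R=1\mid\hat\mX=\hat\xx)$ and $a_{n+1}=\mP(R=0\mid\hat\mX=\hat\xx)$, to prove Thm.~\ref{thm:ds}.
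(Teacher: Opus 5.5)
Your proof is correct, and it takes a different route from the paper's. The paper uses Lagrange multipliers. It forms $\cL(\xx,\lambda)=\sum_i a_i\log\xx_i+\lambda\sum_i\xx_i$ and derives $a_i/\xx_i=-\lambda$ for all $i$. It then substitutes into the constraint, dividing by $a_j$, to get $\xx_j=a_j$. Finally it appeals to $\partial^2\cL/\partial\xx_i^2=-a_i/\xx_i^2\le 0$ to call this stationary point the maximum, unique when all $a_i>0$.

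The paper's route has the merit of producing the candidate rather than requiring it in advance. Its global conclusion, however, rests on a fact the paper never states: with $\lambda^\star=-1$, $\cL(\cdot,-1)$ is concave on the whole orthant. That is what gives $f(\xx)-1=\cL(\xx,-1)\le\cL(\mathbf a,-1)=f(\mathbf a)-1$ on the constraint set. Its division by $a_j$ also silently assumes $a_j>0$.

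Your Jensen/Gibbs argument needs the candidate guessed up front. In exchange, it gives $f(\xx)\le f(\mathbf a)$ for every feasible $\xx$ in one line. It gets uniqueness from the equality case of strictly concave Jensen instead of a second-order check, and it handles zero weights explicitly.

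Your boundary remark is a genuine sharpening of the claim. When some $a_i=0$ there is no interior stationary point at all: the $i$-th stationarity equation forces $\lambda=0$, which contradicts $a_j/\xx_j=-\lambda$ for any $a_j>0$. So the claim's ``maximum is obtained'' holds only as a supremum on the open orthant. This matters for Thm.~\ref{thm:ds}, where $D$ ranges over the open set $\mathcal{P}^n$ and a pointwise weight such as $\mP(R=0\mid\hat\mX=\hat\xx)$ can vanish.
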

 \begin{proof}
 	Introducing Lagrangian multiplier $\lambda$ and we get Lagrangian function $$\cL(\xx, \lambda) = \sum_i^na_i\log \xx_i + \lambda\sum_i^n\xx_i.$$
	Consider the partial derivative of $\cL(\xx, \lambda)$ w.r.t $\xx_i$, 
			\begin{equation}	\nonumber
				\begin{aligned}
					\frac{\partial \cL(\xx, \lambda)}{ \partial \xx_i} &= \frac{a_i}{\xx_i} + \lambda =0	\\
					\lambda & = -\frac{a_i}{\xx_i},\ \forall i.
				\end{aligned}
			\end{equation}
			This implies that $\forall i,j\in[n]$, $\frac{a_i}{\xx_i} = \frac{a_j}{\xx_j}$. Then we w.l.o.g pick up the $j$-th variable, and plug it in the equality constraint to solve for $x_j^{\star}$:
			\begin{equation}
				\begin{aligned}
					\sum_s\xx_s &= \xx_j + \sum_{s\neq j}\xx_s	\\
					& = \xx_j + \sum_{s\neq j} \frac{a_s}{a_j}\xx_j	\\
					& = \frac{\xx_j}{a_j}(\sum_sa_s) = \frac{\xx_j}{a_j} = 1.
				\end{aligned}
			\end{equation}
			This implies that $\xx_j^{\star} = a_j$.
		Consider the second partial derivative of $\cL(\xx, \lambda)$ w.r.t $\xx^{\star}$, $$\frac{\partial^2 \cL(\xx, \lambda)}{\partial \xx_i^2} = -\frac{a_i}{\xx_i^2}\leq 0.$$ The second derivate of any $\xx$ in its domain is always non-positive with $a_i\geq 0$. Then we can conclude the stationary point $\xx^{\star}$ is indeed the maximum point. And it's globally and uniquely maximal if $a_i>0$.
	 \end{proof}
 \subsection{Proof of Thm.~\ref{thm:ds}}\label{sec:opt_d}
% \begin{proposition}The unique optimum set of $\{D_i\}$ is of form, 
%\begin{equation}\nonumber
%	\begin{aligned}
%		D_i^{\star}(\hat\xx) &= \frac{p(Y=i, \hat\xx, R=1)}{\sum_jp(Y=j, \hat\xx\mid R=1)\mP(R=1) + p(\hat\xx\mid R=0)\mP(R=0)} 	\\
%		& = \mP(Y=i, R=1\mid \hat\xx).	\end{aligned}
%\end{equation} 
%\end{proposition}
 \begin{proof}
Consider the explicit formulation of the maximum problem w.r.t only $\{D_i\}$,
	\begin{equation}\nonumber
		\begin{aligned}
			V(\{D_i\})  & =\mP(R=1)\sum_i^n\mP(Y=i)\int p(\hat\xx\mid Y=i, R=1) \log D_i(\hat\xx)\, \mathrm{d}\xx \\&\quad\  + \mP(R=0)\int p(\hat\xx\mid R=0) \log \Big(1-\sum_jD_j(\hat\xx)\Big)\, \mathrm{d}\xx \\
				& =\int \sum_i^np(\hat\xx,Y=i, R=1) \log D_i(\hat\xx)+p(\hat\xx,R=0) \log \Big(1-\sum_jD_j(\hat\xx)\Big)\, \mathrm{d}\xx \\
		\end{aligned}
	\end{equation}
By Claim~\ref{claim: max},  problem $\max_{\substack{b_1,\ldots,b_n \\ \sum_{i=1}^n b_i = 1}}\sum_{i=1}^n a_i \log b_i$ uniquely reaches the optimal when $b_i^{\star}=\frac{a_i}{\sum_ja_j}$. Then $V(\{D_i\})$ reaches the unique optimum when $$D_i^{\star}(\hat\xx) = \frac{p(\hat\xx, i,R=1)}{\sum_i^np(\hat\xx, i,R=1) + p(\hat\xx,R=0)} = \mP(Y=i, R=1\mid \hat\xx),$$ 
 which concludes the proof.
\end{proof}
\subsection{Proof of Thm.~\ref{thm:generator}}\label{sec:opt_g}
%\begin{theorem}
%	The global minimum of  $V(G, \{D_i^{\star}\})$ is uniquely achieved at the point when $p(\hat\xx\mid R=1)=p(\hat\xx\mid R=0)$.
%\end{theorem}
%\begin{assumption}[Data domain]It's assumed that the data space for original data and generated data are identical, and we denote it as $\chi\subset\mathbb{R}^d$.
%\end{assumption}
\begin{proof}
\begin{equation}\nonumber
	\begin{aligned}
		V(G, \{D_i\}^{\star}) & = \mP(R=1)\int \sum_i^np(\hat\xx,Y=i\mid R=1) \log D^{\star}_i(\hat\xx)\, \mathrm{d}\xx 	\\
		&\quad\quad+ \mP(R=0)\int p(\hat\xx\mid R=0) \log \bigg(1-\sum_i^nD^{\star}_i(\hat\xx)\bigg)\, \mathrm{d}\xx	\\
		& = \int \sum_i^np(\hat\xx,Y=i,R=1) \log \frac{p(\hat\xx,Y=i, R=1)}{p(\hat\xx)} + p(\hat\xx, R=0) \log \frac{p(\hat\xx, R=0)}{p(\hat\xx)}\, \mathrm{d}\xx	\\
		& = \int \sum_i^np(\hat\xx,Y=i,R=1) \log \frac{\mP(R=1)\mP(Y=i\mid \hat\xx, R=1)p(\hat\xx\mid R=1)}{p(\hat\xx)} 	\\
		&\quad\quad+ p(\hat\xx, R=0) \log \frac{p(\hat\xx\mid R=0)\mP(R=0)}{p(\hat\xx)}\, \mathrm{d}\xx		\\
		& = \mP(R=1)\int \underbrace{\sum_i^np(Y=i, \hat\xx\mid R=1)}_{=p(\hat\xx\mid R=1)} \log \frac{p(\hat\xx\mid R=1)}{p(\hat\xx)}\, \mathrm{d}\xx 	\\
		&\quad\quad+ \mP(R=0)\int p(\hat\xx\mid R=0) \log \frac{p(\hat\xx\mid R=0)}{p(\hat\xx)}\, \mathrm{d}\xx		\\
		& \quad\quad+ \int p(\hat\xx,R=1)\sum_i^n\mP(Y=i\mid \hat\xx,R=1) \log \ \mP(Y=i\mid \hat\xx, R=1)\, \mathrm{d}\xx  	\\
		& \quad\quad+ \int \mP(R=1)\sum_i^np(Y=i, \hat\xx\mid R=1) \log \ \mP(R=1)\, \mathrm{d}\xx 	\\
		& \quad\quad+ \mP(R=0)\int p(\hat\xx\mid R=0) \log \ \mP(R=0)\, \mathrm{d}\xx	\\
		& =\mP(R=1)D_{\text{KL}}(\hat\xx\mid R=1\,\|\ \hat\xx) + \mP(R=0)D_{\text{KL}}(\hat\xx\mid R=0\,\|\ \hat\xx) 	\\
		&\quad\quad+ \underbrace{\mP(R=1)\int p(\hat\xx\mid R=1)\sum_i^n\mP(Y=i\mid \hat\xx,R=1) \log \mP(Y=i\mid \hat
		\xx,R=1)\, \mathrm{d}\xx}_{=H(Y\mid \hat\xx, R=1)}	\\
		&\quad\quad+\underbrace{\mP(R=1) \log \ \mP(R=1) + \mP(R=0) \log \ \mP(R=0)}_{=H_{\text{binomial}}(R)}
	\end{aligned}
\end{equation}
Since we know that $H_{\text{binomial}}(R)$ and $H(Y\mid \hat\xx, R=1)$ are constant w.r.t generator $G$.\footnote{Note that $H_{\text{binomial}}(R)$ is not the entropy for arbitrary missing pattern, but for indicator variable which only contains two outcomes, $R=1$ and $R=0$. Furthermore, since $P(Y\mid \hat\xx, R=1)$ and $p(\hat\xx\mid R=1)$ are irrelevant to the generator $G$, the entropy $H(Y\mid \hat\xx, R=1)$ is constant w.r.t. $G$.}.Then due to the optimality criterion of KL-divergence, the global minimum $V(G^{\star}, D^{\star})$ is obtained at $p(\hat\xx^{\star}\mid R=1) = p(\hat\xx^{\star}) = p(\hat\xx^{\star}\mid R=0)$.
\end{proof}

\subsection{Proof of Thm.~\ref{thm:classifier}}\label{sec:opt_c}

We prove the recovery of the Bayes-optimal classifier by composing 
the optimality results for $D^\star$ and $G^\star$ established in 
Thm.~\ref{thm:ds} and~\ref{thm:generator}.
\begin{proof}
Consider an incomplete observation $\tilde\xx_0$ with missing 
mask $\mathbf{m}_0\neq \unit$, which partitions the coordinates into observed indices $\obs:=\{j:(\mm_0)_j=1\}$ and missing indices $\miss := \obs^c$. Then the corresponding imputation for this observation is $\hat\mX_{\miss}^\star = [G^\star(\tilde\xx_0, \mathbf{m}_0, \mZ)]_{\miss}$ with $\mZ \sim p_{\mZ}(\zz)$.

By Eq.~\eqref{eq:normalized}, for any $\hat\xx \in \mathbb{R}^d$ 
the optimal discriminator satisfies
\begin{equation}\label{eq:pointwise}
\frac{D_i^\star(\hat\xx)}{\sum_{j=1}^n D_j^\star(\hat\xx)} 
= \mP(Y=i \mid \mX = \hat\xx).
\end{equation}
In particular, evaluating at $\hat\xx = (\xx_{\obs}, \hat\xx_{\miss}^\star)$ 
where $\hat\xx_{\miss}^\star$ is any realization of 
$\hat\mX_{\miss}^\star$, 
we obtain
\begin{equation}\label{eq:proof_pointwise}
\frac{D_i^\star(\xx_{\obs}, \hat\xx_{\miss}^\star)}
     {\sum_{j=1}^n D_j^\star(\xx_{\obs}, \hat\xx_{\miss}^\star)}
= \mP(Y=i \mid \mX = (\xx_{\obs}, \hat\xx_{\miss}^\star)).
\end{equation}

By the distribution alignment property in 
Corollary~\ref{cor:match}, the imputed random variable produced 
by $G^\star$ satisfies $\hat\mX^\star \stackrel{d}{=} \mX$. By construction, $\hat\mX^\star = \tilde\mX\odot\mM+G^\star(\tilde\mX, \mM, \mZ)\odot(\unit-\mM)$. The observed entries of $\hat\mX^\star$ satisfy $\hat\mX^\star\odot\mM = \tilde\mX \odot \mM = \mX \odot \mM.$ That is, $\hat\mX^\star$ is deterministically identical to $\mX$ on observed coordinates. Then conditioning on 
$\xx_{\obs}$, this implies
\begin{equation}\label{eq:proof_dist_match}
\hat\mX_{\miss}^\star \mid \xx_{\obs} \stackrel{d}{=} 
\mX_{\miss} \mid \xx_{\obs},
\end{equation}
i.e., the distribution of the imputed missing coordinates matches the true conditional distribution $p_{\mX_{\miss}}(\xx_{\miss} \mid \xx_{\obs})$.
Concretely, the induced density of $\hat\mX_{\miss}^\star$ given 
$\xx_{\obs}$ takes the form
\begin{equation}\nonumber%\label{eq:proof_g_density}
p_{\hat\mX_{\miss}^\star}( \uu\mid \xx_{\obs}) 
= \int \delta\!\left(\uu - [G^\star(\tilde\xx_0, 
\mm_0, \zz)]_{\miss}\right) p_{\mZ}(\zz) \, \mathrm{d}\zz 
= p_{\mX_{\miss}}(\uu \mid \xx_{\obs}),\quad\uu\in\mathbb{R}^{\mid \miss\mid }
\end{equation}
i.e., random sampling of $\zz \sim p_{\mZ}(\zz)$ followed by 
$G^\star$'s transformation produces samples from the true 
conditional distribution.

Then taking expectation of Eq.~\eqref{eq:proof_pointwise} with respect to $\hat\mX_{\miss}^\star \mid \xx_{\obs}$, and applying Eq.~\eqref{eq:proof_dist_match}, we have
\begin{equation}
\begin{aligned}
\mathbb{E}_{\hat\mX_{\miss}^\star \mid \xx_{\obs}}\!\left[
\frac{D_i^\star(\xx_{\obs}, \hat\mX_{\miss}^\star)}
     {\sum_{j=1}^n D_j^\star(\xx_{\obs}, \hat\mX_{\miss}^\star)}
\right]
&\stackrel{(\ref{eq:pointwise})}{=} \int \mP(Y=i \mid \mX = (\xx_{\obs}, \uu))\,
       p_{\hat\mX_{\miss}^\star}(\uu \mid \xx_{\obs})\, \mathrm{d}\uu\\
&\stackrel{(\ref{eq:proof_dist_match})}{=} \int \mP(Y=i \mid \mX = (\xx_{\obs}, \uu))\,
       p_{\mX_{\miss}}(\uu \mid \xx_{\obs})\, \mathrm{d}\uu\\
& = \mP(Y=i \mid \xx_{\obs}).\\
\end{aligned}
\end{equation}
This is the Bayes-optimal classifier under missingness, completing 
the proof. \end{proof}

\newpage
\section{Further Experimental details}\label{app:experiment}
All experiments were conducted on the NVIDIA H100 80GB HBM3 GPUs (CUDA 12.8). Each job was executed on a single GPU worker with approximately 80GB GPU memory available. Adam is used for experiments on real-world datasets, while SGD is used for experiments on synthetic datasets to reduce overfitting.
\subsection{Model archtecture}\label{app:model}
\begin{table}[H]
\centering
\small
\setlength{\tabcolsep}{6pt}
\renewcommand{\arraystretch}{1.05}
%\resizebox{\textwidth}{!}{
\begin{tabular}{c l c c}
\toprule
\multicolumn{4}{c}{\textbf{Generator} (shared; $n_{\text{ch}}=1$ for FashionMNIST, $n_{\text{ch}}=3$ for CIFAR-10)} \\
\hline
\textbf{Layer} & \textbf{Operation} & \textbf{Kernel / Stride / Padding} & \textbf{Out Ch.} \\
\hline

\multirow{3}{*}{1} & Conv2D    & $3\times3\ /\ 1\ /\ 1$ & 128 \\
                   & BatchNorm & --                    & 128 \\
                   & ReLU      & --                    & 128 \\
\hline

\multirow{3}{*}{2} & Conv2D    & $3\times3\ /\ 1\ /\ 1$ & 64 \\
                   & BatchNorm & --                    & 64 \\
                   & ReLU      & --                    & 64 \\
\hline

3 & Conv2D & $3\times3\ /\ 1\ /\ 1$ & $n_{\text{ch}}$ \\
4 & Tanh   & --                    & $n_{\text{ch}}$ \\
\bottomrule
\end{tabular}%}
\vspace{1mm}
\caption{Generator architecture. Input to layer 1 has $2n_{\text{ch}}$ channels (FashionMNIST: 2, CIFAR-10: 6).}
\label{tab:gen-arch}
%\end{table}

%\begin{table}[t]
\centering
\small
\setlength{\tabcolsep}{6pt}
\renewcommand{\arraystretch}{1.05}
\resizebox{\textwidth}{!}{
\begin{tabular}{c l c c | c l c c}
\toprule 
\multicolumn{4}{c}{\textbf{FashionMNIST MIWAE} (Gray, 1 ch.)} &
\multicolumn{4}{c}{\textbf{CIFAR-10 MIWAE} (RGB, 3 ch.)} \\
\hline
\textbf{Layer} & \textbf{Operation} & \textbf{Kernel / Stride / Padding} & \textbf{Out Ch.} &
\textbf{Layer} & \textbf{Operation} & \textbf{Kernel / Stride / Padding} & \textbf{Out Ch.} \\
\hline

\multicolumn{4}{l|}{\textbf{Encoder}} &
\multicolumn{4}{l}{\textbf{Encoder}} \\
\hline

\multirow{3}{*}{1} & Conv2D     & $3\times3\ /\ 1\ /\ 1$ & 64  &
\multirow{3}{*}{1} & Conv2D     & $3\times3\ /\ 1\ /\ 1$ & 64  \\
                   & BatchNorm  & --                    & 64  &
                   & BatchNorm  & --                    & 64  \\
                   & ReLU       & --                    & 64  &
                   & ReLU       & --                    & 64  \\
\hline

\multirow{3}{*}{2} & Conv2D     & $3\times3\ /\ 2\ /\ 1$ & 128 &
\multirow{3}{*}{2} & Conv2D     & $3\times3\ /\ 2\ /\ 1$ & 128 \\
                   & BatchNorm  & --                    & 128 &
                   & BatchNorm  & --                    & 128 \\
                   & ReLU       & --                    & 128 &
                   & ReLU       & --                    & 128 \\
\hline

\multirow{3}{*}{3} & Conv2D     & $3\times3\ /\ 2\ /\ 1$ & 128 &
\multirow{3}{*}{3} & Conv2D     & $3\times3\ /\ 2\ /\ 1$ & 128 \\
                   & BatchNorm  & --                    & 128 &
                   & BatchNorm  & --                    & 128 \\
                   & ReLU       & --                    & 128 &
                   & ReLU       & --                    & 128 \\
\hline

\multirow{3}{*}{4} & Conv2D     & $3\times3\ /\ 2\ /\ 1$ & 128 &
\multirow{3}{*}{4} & Conv2D     & $3\times3\ /\ 2\ /\ 1$ & 128 \\
                   & BatchNorm  & --                    & 128 &
                   & BatchNorm  & --                    & 128 \\
                   & ReLU       & --                    & 128 &
                   & ReLU       & --                    & 128 \\
\hline

5 & Linear ($\mu$)        & $2048 \rightarrow 20$ & 20 &
5 & Linear ($\mu$)        & $2048 \rightarrow 20$ & 20 \\
6 & Linear ($\log\sigma^2$) & $2048 \rightarrow 20$ & 20 &
6 & Linear ($\log\sigma^2$) & $2048 \rightarrow 20$ & 20 \\
\hline

\multicolumn{4}{l|}{\textbf{Decoder}} &
\multicolumn{4}{l}{\textbf{Decoder}} \\
\hline

1 & Linear & $20 \rightarrow 2048$ & 2048 &
1 & Linear & $20 \rightarrow 2048$ & 2048 \\
\hline

\multirow{3}{*}{2} & Deconv (ConvT2D) & $3\times3\ /\ 2\ /\ 1$ & 128 &
\multirow{3}{*}{2} & Deconv (ConvT2D) & $4\times4\ /\ 2\ /\ 1$ & 128 \\
                   & BatchNorm         & --                    & 128 &
                   & BatchNorm         & --                    & 128 \\
                   & ReLU              & --                    & 128 &
                   & ReLU              & --                    & 128 \\
\hline

\multirow{3}{*}{3} & Deconv (ConvT2D) & $4\times4\ /\ 2\ /\ 1$ & 128 &
\multirow{3}{*}{3} & Deconv (ConvT2D) & $4\times4\ /\ 2\ /\ 1$ & 128 \\
                   & BatchNorm         & --                    & 128 &
                   & BatchNorm         & --                    & 128 \\
                   & ReLU              & --                    & 128 &
                   & ReLU              & --                    & 128 \\
\hline

\multirow{3}{*}{4} & Deconv (ConvT2D) & $4\times4\ /\ 2\ /\ 1$ & 64  &
\multirow{3}{*}{4} & Deconv (ConvT2D) & $4\times4\ /\ 2\ /\ 1$ & 64  \\
                   & BatchNorm         & --                    & 64  &
                   & BatchNorm         & --                    & 64  \\
                   & ReLU              & --                    & 64  &
                   & ReLU              & --                    & 64  \\
\hline

5 & Conv2D ($\mu_x$)        & $3\times3\ /\ 1\ /\ 1$ & 1 &
5 & Conv2D ($\mu_x$)        & $3\times3\ /\ 1\ /\ 1$ & 3 \\
6 & Conv2D ($\log\sigma_x^2$) & $3\times3\ /\ 1\ /\ 1$ & 1 &
6 & Conv2D ($\log\sigma_x^2$) & $3\times3\ /\ 1\ /\ 1$ & 3 \\
7 & Tanh & -- & 1 &
7 & Tanh & -- & 3 \\
\bottomrule
\end{tabular}}
\vspace{1mm}
\caption{MIWAE architectures for FashionMNIST and CIFAR-10}
\label{tab:miwae-arch}
\end{table}
\subsection{Inference via Monte Carlo marginalization.}\label{app:inference}
At test time, we approximate the marginalization in Thm.~\ref{thm:classifier} by Monte Carlo averaging over $K$ imputations. Specifically, we draw $\{\zz^{(k)}\}_{k=1}^{K}\sim p_{\mZ}(\zz)$, form imputations $\hat{\xx}^{(k)}_{\miss}=[G_{\theta}(\xx_{obs},\mm,\zz^{(k)})]_{\miss}$, and compute
 {
\setlength{\abovedisplayskip}{2pt}
\setlength{\belowdisplayskip}{2pt}
\setlength{\abovedisplayshortskip}{2pt}
\setlength{\belowdisplayshortskip}{2pt}
\[
\mP(y=i \mid \xx_{\obs})
\;\approx\;
\frac{1}{K}\sum_{k=1}^{K}
\frac{(D_{\boldsymbol{\omega}})_i\!\left(\xx_{\obs}, \hat{\xx}^{(k)}_{\miss}\right)}
{\sum_{j=1}^{n} (D_{\boldsymbol{\omega}})_j\!\left(\xx_{\obs}, \hat{\xx}^{(k)}_{\miss}\right)}.
\]}

\subsection{Image reconstruction}
In this section, Table~\ref{tab:fashion_mse} and Table~\ref{tab:cifar_mse} present the reconstruction performance for FashionMNIST and CIFAR10 under different missing setting. For each cell, from top to the bottom is the original, the corrupted, the mean-imputed, the GAIN-imputed, the SupMIWAE-imputed, the LAC-imputed and the IICN-imputed. MSE column records the mse per missing pixel. The caluation is the same as in Eq.~\eqref{eq:mse}. We can observe that the high quality of imputation is not equal to superior classification performance.
\begin{table}[H]
%\begin{minipage}{\textwidth}
\centering
\textbf{FashionMNIST}
\renewcommand{\arraystretch}{2.35}
\resizebox{\textwidth}{!}{
\begin{tabular}{|p{0.325\textwidth}|c|p{0.325\textwidth}|c|p{0.325\textwidth}|c|}
\hline
\multicolumn{2}{|c|}{$P_{\obs}=0.9,\;P_{\miss}=0.5$} & \multicolumn{2}{c|}{$P_{\obs}=0.5,\;P_{\miss}=0.5$} & \multicolumn{2}{c|}{$P_{\obs}=0.1,\;P_{\miss}=0.5$}\\
\hline
\multirow{7}{*}{\includegraphics[width=\linewidth]{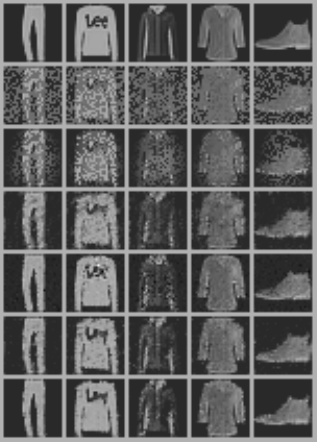}} & \textbf{MSE} &\multirow{6}{*}{\includegraphics[width=\linewidth]{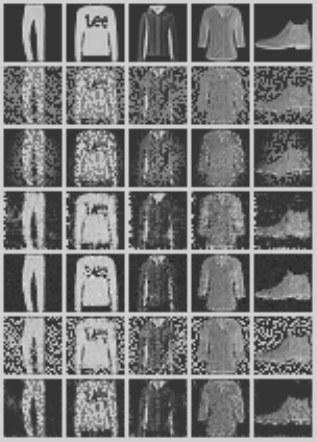}}& \textbf{MSE}&\multirow{6}{*}{\includegraphics[width=\linewidth]{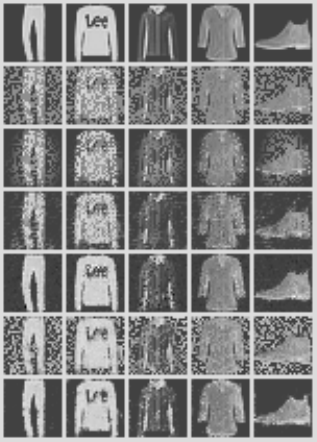}}& \textbf{MSE} \\\cline{2-2}\cline{4-4}\cline{6-6}
& 0.7058 & &0.7082 & &0.7057\\\cline{2-2}\cline{4-4}\cline{6-6}
& 0.3739 & &0.3521 & &0.3571\\\cline{2-2}\cline{4-4}\cline{6-6}
& 0.1509 & &0.3083 && 0.2071 \\\cline{2-2}\cline{4-4}\cline{6-6}
& 0.0999 & &0.0767 && 0.0895\\\cline{2-2}\cline{4-4}\cline{6-6}
& 0.1132& &1.1768 && 0.9733\\\cline{2-2}\cline{4-4}\cline{6-6}
& 0.0910& &0.2259 && 0.1120\\\cline{2-2}\cline{4-4}\cline{6-6}
\hline
\multicolumn{2}{|c|}{$P_{\obs}=0.9,\;P_{\miss}=0.8$} & \multicolumn{2}{c|}{$P_{\obs}=0.5,\;P_{\miss}=0.8$} & \multicolumn{2}{c|}{$P_{\obs}=0.1,\;P_{\miss}=0.8$}\\\hline
\multirow{7}{*}{\includegraphics[width=\linewidth]{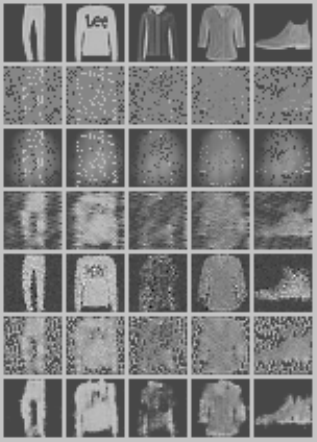}} & \textbf{MSE} &\multirow{6}{*}{\includegraphics[width=\linewidth]{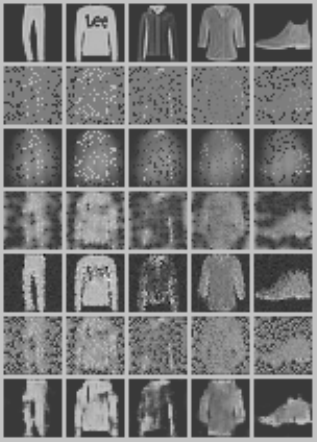}}&\textbf{MSE}&\multirow{6}{*}{\includegraphics[width=\linewidth]{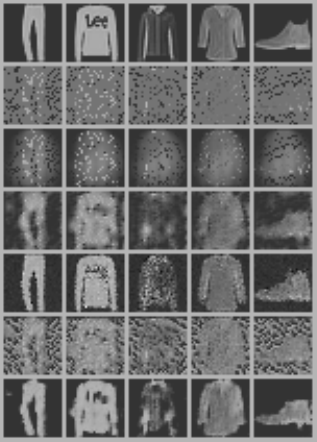}}& \textbf{MSE} \\\cline{2-2}\cline{4-4}\cline{6-6}
& 0.6992 & &0.7083 & &0.7046\\\cline{2-2}\cline{4-4}\cline{6-6}
& 0.3603 & &0.3611 & &0.3599\\\cline{2-2}\cline{4-4}\cline{6-6}
& 0.2913 & &0.3318 && 0.1977 \\\cline{2-2}\cline{4-4}\cline{6-6}
& 0.1496 & &0.1289 && 0.1544\\\cline{2-2}\cline{4-4}\cline{6-6}
& 0.6941& &0.6749 && 0.8061\\\cline{2-2}\cline{4-4}\cline{6-6}
& 0.1422& &0.1510 && 0.1505\\\cline{2-2}\cline{4-4}\cline{6-6}
\hline
\end{tabular}}
\vspace{1mm}
\caption{Qualitative reconstruction results and corresponding MSE values on FashionMNIST under different observation and missing probabilities. MSE denotes the mean squared reconstruction error averaged over missing pixels only. For each setting ($P_{\mathrm{obs}}, P_{\mathrm{miss}}$), we visualize representative samples together with their mean squared error (MSE), reported on the right of each image block. Lower MSE values generally correlate with visually improved reconstruction quality. This figure complements the main paper by providing additional qualitative comparisons.}
%\end{minipage}\par
\label{tab:fashion_mse}
\end{table}
\begin{table}[H]
%\vspace{0.5em}
%\begin{minipage}{\textwidth}
\centering
\textbf{CIFAR10}
\renewcommand{\arraystretch}{2.35}
\resizebox{\textwidth}{!}{
\begin{tabular}{|p{0.325\textwidth}|c|p{0.325\textwidth}|c|p{0.325\textwidth}|c|}
\hline
\multicolumn{2}{|c|}{$P_{\obs}=0.9,\;P_{\miss}=0.5$} & \multicolumn{2}{c|}{$P_{\obs}=0.5,\;P_{\miss}=0.5$} & \multicolumn{2}{c|}{$P_{\obs}=0.1,\;P_{\miss}=0.5$}\\
\hline
\multirow{7}{*}{\centering\includegraphics[width=\linewidth]{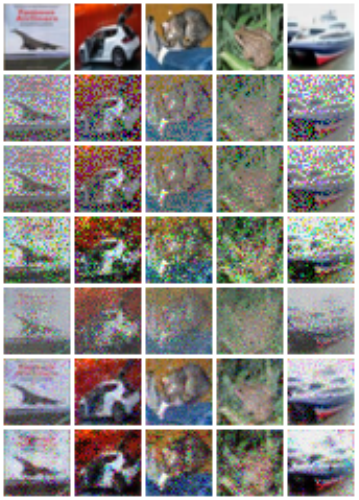}} & MSE &\multirow{6}{*}{\includegraphics[width=\linewidth]{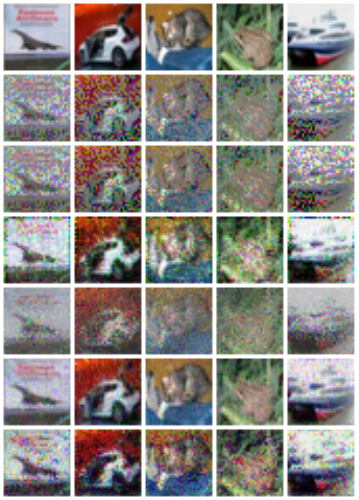}}& MSE&\multirow{6}{*}{\includegraphics[width=\linewidth]{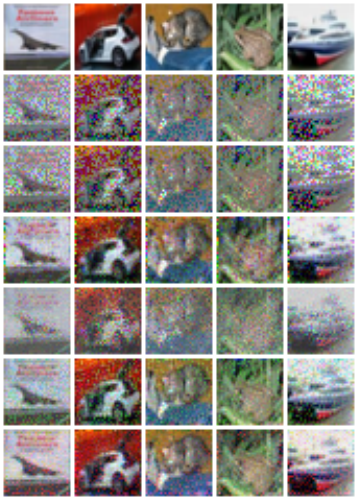}}& MSE \\\cline{2-2}\cline{4-4}\cline{6-6}
& 0.2808 & &0.2773 & &0.2750\\\cline{2-2}\cline{4-4}\cline{6-6}
& 0.2758 & &0.2715 & &0.2696\\\cline{2-2}\cline{4-4}\cline{6-6}
& 0.2294 & &0.1493 && 0.1510\\\cline{2-2}\cline{4-4}\cline{6-6}
& 0.1629 & &0.1581 && 0.1666\\\cline{2-2}\cline{4-4}\cline{6-6}
& 0.0623& &0.0383 && 0.0738\\\cline{2-2}\cline{4-4}\cline{6-6}
& 0.1128& &0.1307 && 0.0754\\\cline{2-2}\cline{4-4}\cline{6-6}
\hline
\multicolumn{2}{|c|}{$P_{\obs}=0.9,\;P_{\miss}=0.8$} & \multicolumn{2}{c|}{$P_{\obs}=0.5,\;P_{\miss}=0.8$} & \multicolumn{2}{c|}{$P_{\obs}=0.1,\;P_{\miss}=0.8$}\\\hline
\multirow{7}{*}{\includegraphics[width=\linewidth]{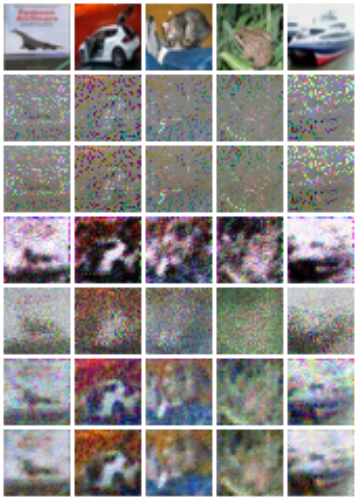}} & MSE &\multirow{6}{*}{\includegraphics[width=\linewidth]{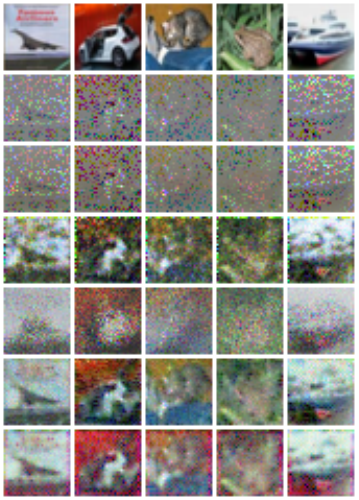}}& MSE&\multirow{6}{*}{\includegraphics[width=\linewidth]{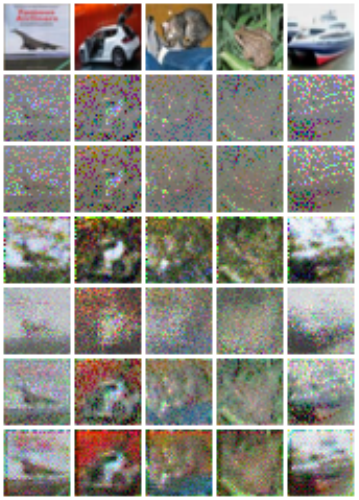}}& MSE \\\cline{2-2}\cline{4-4}\cline{6-6}
& 0.2767 & &0.2776 & &0.2748\\\cline{2-2}\cline{4-4}\cline{6-6}
& 0.2711 & &0.2718 & &0.2695\\\cline{2-2}\cline{4-4}\cline{6-6}
& 0.1440 & &0.1289 && 0.1285\\\cline{2-2}\cline{4-4}\cline{6-6}
& 0.1672 & &0.1749 && 0.1673\\\cline{2-2}\cline{4-4}\cline{6-6}
& 0.1045& &0.0654 && 0.1342\\\cline{2-2}\cline{4-4}\cline{6-6}
& 0.0567& &0.1054 && 0.1093\\\cline{2-2}\cline{4-4}\cline{6-6}
\hline
\end{tabular}}
%\end{minipage}
\vspace{1mm}
\caption{Qualitative reconstruction results and corresponding MSE values on CIFAR-10 under different observation and missing probabilities. MSE denotes the mean squared reconstruction error averaged over missing pixels only. For each setting ($P_{\mathrm{obs}}, P_{\mathrm{miss}}$), we visualize representative samples together with their mean squared error (MSE), reported on the right of each image block. Lower MSE values generally correlate with visually improved reconstruction quality. This figure complements the main paper by providing additional qualitative comparisons.}
\label{tab:cifar_mse}
\end{table}

\newpage
\end{document}